\DeclareRobustCommand{\eg}{e.g.,\@\xspace}
\DeclareRobustCommand{\ie}{i.e.,\@\xspace}
\DeclareRobustCommand{\wrt}{w.r.t.\@\xspace}
\DeclareMathOperator{\eig}{eig}
\newcommand{\vtheta}{\boldsymbol{\theta}}
\newcommand{\vmu}{\boldsymbol{\mu}}
\newcommand{\vx}{\boldsymbol{x}}
\newcommand{\Reals}{\mathbb{R}}
\newcommand{\de}{\,\mathrm{d}}
\newcommand{\norm}[2][]{\left\|#2\right\|_{#1}}
\DeclareMathOperator*{\EV}{\mathbb{E}}
\DeclareMathOperator*{\argmax}{arg\,max}
\newcommand{\Dataset}{\mathcal{D}}
\newcommand{\hnabla}{\widehat{\nabla}}
\newcommand{\Aspace}{\mathcal{A}}
\newcommand{\Sspace}{\mathcal{S}}
\newcommand{\Init}{\mu}
\newcommand{\Rmax}{R_{\max}}
\newcommand{\sm}{\psi}
\newcommand{\BlackBox}{\rule{1.5ex}{1.5ex}}  
\newenvironment{proof}{\par\noindent{\bf Proof\ }}{\hfill\BlackBox\\[2mm]}
\newtheorem{theorem}{Theorem}
\newtheorem{lemma}[theorem]{Lemma} 
\newtheorem{proposition}[theorem]{Proposition}
\newcommand{\citet}[1]{citeauthor{#1}\shortcite{#1}}
\newcommand{\citep}{\cite}
\title{Risk-Averse Trust Region Optimization \\for Reward-Volatility Reduction}
\author{
Lorenzo Bisi$^{1,2}$,
Luca Sabbioni$^{1,2}$,
Edoardo Vittori$^3$,
Matteo Papini$^1$,
Marcello Restelli$^{1,2}$\\
$^1$Politecnico di Milano, Milan, Italy\\
$^2$ Institute for Scientific Interchange Foundation, Turin, Italy \\
$^3$Banca IMI, Milan, Italy\\
\{lorenzo.bisi, luca.sabbioni, matteo.papini, marcello.restelli\}@polimi.it,
edoardo.vittori@bancaimi.it
}
\begin{document}

\maketitle

\begin{abstract}
In real-world decision-making problems, for instance in the fields of finance, robotics or autonomous driving, keeping uncertainty under control is as important as maximizing expected returns.
Risk aversion has been addressed in the reinforcement learning literature through risk measures related to the variance of returns.
However, in many cases, the risk is measured not only on a long-term perspective, but also on the step-wise rewards (e.g., in trading, to ensure the stability of the investment bank, it is essential to monitor the risk of portfolio positions on a daily basis).
In this paper, we define a novel measure of risk, which we call reward volatility, consisting of the variance of the rewards under the state-occupancy measure. We show that the reward volatility bounds the return variance so that reducing the former also constrains the latter.
We derive a policy gradient theorem with a new objective function that exploits the mean-volatility relationship, and develop an actor-only algorithm.
Furthermore, thanks to the linearity of the Bellman equations defined under the new objective function, it is possible to adapt the well-known policy gradient algorithms with monotonic improvement guarantees such as TRPO in a risk-averse manner. Finally, we test the proposed approach in two simulated financial environments.
\end{abstract}
\section{Introduction}
Reinforcement Learning (RL)~\citep{Suttonbook:1998} methods have recently grown in popularity in many types of applications. Powerful policy search~\cite{deisenroth2013survey} algorithms, such as TRPO \citep{schulman2015trust} and PPO \citep{schulman2017proximal}, give very exciting results~\cite{OpenAI_dota,Heess2017Emergence} in terms of efficiently maximizing the expected value of the cumulative discounted rewards (referred to as expected return). These types of algorithms are proving to be very effective in many sectors, but they are leaving behind problems in which maximizing the return is not the only goal and risk aversion becomes an important objective too. Risk-averse reinforcement learning is not a new theme: a utility based approach has been introduced in  \citep{shen_risk-averse_2014} and \citep{moldovan_risk_2012} where the value function becomes the expected value of a utility function of the reward. A category of objective functions called \emph{coherent risk functions}, characterized by convexity, monotonicity, translation invariance and positive homogeneity, has been defined and studied in \citep{tamar_sequential_2017}. These include known risk functions such as CVaR (Conditional Value at Risk) and mean-semideviation.
Another category of risk-averse objective functions are those which include the variance of the returns (referred to as return variance throughout the paper), which is then combined with the standard return in a mean-variance \citep{tamar_variance_2013,pra-ghav} or a Sharpe ratio~\citep{moody2001learning} fashion. \\
In certain domains, return variance and CVaR are not suitable to correctly capture risk. In finance, for instance, keeping a low return variance could be appropriate in the case of long term investments, where performance can be measured on the Profit and Loss (P\&L) made at the end of the year. However, in any other type of investment, interim results are evaluated frequently, thus keeping a low-varying \emph{daily} P\&L becomes crucial. \\ 
This paper analyzes, for the first time, the variance of the reward at each time step \wrt state visitation probabilities. We call this quantity \emph{reward volatility}.
Intuitively, the return variance measures the variation of accumulated reward among trajectories, while reward volatility is concerned with the variation of single-step rewards among visited states. 
Reward volatility is used to define a new risk-averse performance objective which trades off the maximization of expected return with the minimization of short-term risk (called \textit{mean-volatility}).\\
The main contribution of this paper is the derivation of a policy gradient theorem for the new risk-averse objective, based on a novel risk-averse Bellman equation. 
These theoretical results are made possible by the simplified tractability of reward volatility compared to return variance, as the former lacks the problematic inter-temporal correlation terms of the latter.
However, we also show that reward volatility upper bounds the return variance (albeit for a normalization term). This is an interesting result, indicating that it is possible to use the analytic results we derived for the reward volatility to keep under control the return variance. \\
If correctly optimized, the mean-volatility objective allows to limit the \textit{inherent risk} due to the stochastic nature of the environment. However, the imperfect knowledge of the model parameters, and the consequent imprecise optimization process, is another relevant source of risk, known as \textit{model risk}. This is especially important when the optimization is performed on-line, as may happen for an autonomous, adaptive trading system. To avoid any kind of performance oscillation, the intermediate solutions implemented by the learning algorithm must guarantee continuing improvement. The TRPO algorithm provides this kind of guarantees (at least in its ideal formulation) for the risk-neutral objective.
The second contribution of our paper is the derivation of the Trust Region Volatility Optimization (TRVO) algorithm, a TRPO-style algorithm for the new mean-volatility objective. 
After some background on policy gradients (Section~\ref{sec:pre}), the volatility measure is introduced in Section~\ref{sec:risk_measure} and compared to the return variance. The Policy Gradient Theorem for the mean-volatility objective is provided in Section~\ref{sec:grad}. In Section~\ref{sec:estimator}, we introduce an estimator for the gradient which is based on the sample trajectories obtained from direct interaction with the environment, which yields a practical risk-averse policy gradient algorithm (VOLA-PG). The TRVO algorithm is introduced in Section~\ref{sec:TRVO}.
Finally, in Section~\ref{sec:experiments}, we test our algorithms on two simulated financial environments; in the former the agent has to balance investing in liquid and non-liquid assets with simulated dynamics while in the latter the agent has to learn trading on a real asset using historical data.
\section{Preliminaries}\label{sec:pre}
A discrete-time Markov Decision Process (MDP) is defined as a tuple $\langle\Sspace,\Aspace, \mathcal{P}, \mathcal{R}, \gamma, \mu\rangle$, where $\Sspace$ is the (continuous) state space, $\Aspace$ the (continuous) action space, $\mathcal{P}(\cdot|s,a)$ is a Markovian transition model that assigns to each state-action pair $(s,a)$ the probability of reaching the next state $s'$, $\mathcal{R}$ is a bounded reward function, i.e. $\sup_{s\in\Sspace, a\in\Aspace}|\mathcal{R}(s,a)|\leq\Rmax$ , $\gamma\in[0,1)$ is the discount factor, and $\mu$ is the distribution of the initial state. The policy of an agent is characterized by $\pi(\cdot|s)$, which assigns to each state $s$ the density distribution over the action space $\Aspace$.\\
We consider infinite-horizon problems in which future rewards are exponentially discounted with~$\gamma$.  Following a trajectory $\tau \coloneqq (s_0, a_0, s_1, a_1, s_2, a_2, ...$), let the returns be defined as the discounted cumulative reward: $    G_\tau = \sum_{t=0}^\infty \gamma^t \mathcal{R}(s_t,a_t).$
For each state $s$ and action $a$, the action-value function is defined as:
\begin{equation}
    Q_\pi(s,a) \coloneqq \EV_{\substack{s_{t+1}\sim \mathcal{P}(\cdot|s_{t},a_{t})\\a_{t+1}\sim\pi(\cdot|s_{t+1})}}\left[\sum_{t=0}^\infty \gamma^t \mathcal{R}(s_t,a_t)|s_0 = s, a_0 = a\right], 
    \label{eq:Q_fun}
\end{equation}
which can be recursively defined by the following Bellman equation:
\begin{equation*}
        Q_\pi(s,a) = \mathcal{R}(s,a) + \gamma \EV_{\substack{s'\sim \mathcal{P}(\cdot|s,a)\\a'\sim\pi(\cdot|s')}}\big[Q_\pi(s',a')\big].
\end{equation*}
For each state $s$, we define the state-value function of the stationary policy $\pi(\cdot|s)$ as:
\begin{equation}
\begin{aligned}
    V_\pi(s) &\coloneqq \EV_{\substack{a_t\sim\pi(\cdot|s_t)\\s_{t+1}\sim \mathcal{P}(\cdot|s_{t},a_{t})}} \left[\sum_{t=0}^\infty \gamma^t \mathcal{R}(s_t,a_t)|s_0 = s\right] \\& = \EV_{a\sim\pi(\cdot|s)}\big[Q(s,a)\big]. 
    \label{eq:V_fun}
    \end{aligned}
\end{equation}
It is useful to introduce the (discounted) state-occupancy measure induced by $\pi$:
\begin{align*}
	d_{\Init,\pi}(s) &\coloneqq (1-\gamma)\int_{S}\mu(s_0)\sum_{t=0}^{\infty}\gamma^tp_{\pi}(s_0 \xrightarrow{t} s)\de s_0,\label{def:dmupi}
\end{align*}
where $p_{\pi}(s_0 \xrightarrow{t} s)$ is the probability of reaching state $s$ in $t$ steps from $s_0$ following policy $\pi$ (see Appendix~\ref{app:proofs}).

\subsection{Actor-only policy gradient}
\label{ssec:policy_grad}
From the previous subsection, it is possible to define the \emph{normalized}\footnote{In our notation, the expected return (as commonly defined in the RL literature) is ${J_{\pi}}\big/{(1-\gamma)}.$} expected return $J_\pi$ using two distinct formulations, one based on transition probabilities and the other on state occupancy $d_{\mu,\pi}$:
\begin{align*}
    J_\pi & \coloneqq (1-\gamma)\EV_{\substack{s_0\sim \mu \\a_t\sim\pi(\cdot|s_t)\\s_{t+1}\sim \mathcal{P}(\cdot|s_{t},a_{t})}}\left[\sum_{t=0}^\infty \gamma^t \mathcal{R}(s_t,a_t)\right] 
    \\ &= \EV_{\substack{s\sim d_{\mu,\pi}\\a\sim\pi(\cdot|s)}}\left[ \mathcal{R}(s,a)\right].
\end{align*}    

For the rest of the paper, we consider parametric policies, where the policy $\pi_{\vtheta}$ is parametrized by a vector $\vtheta\in \Theta \subseteq \mathbb{R}^m$. In this case, the goal is to find the optimal parametric policy maximizing the performance, i.e. $\vtheta^* = \arg\max_{\vtheta\in\Theta}J_\pi$.\footnote{For the sake of brevity, when a variable depends on the policy $\pi_{\vtheta}$, in subscripts only $\pi$ is shown, omitting the dependency on the parameters $\vtheta$.}
The Policy Gradient Theorem (PGT)~\citep{Suttonbook:1998} states that, for a given policy $\pi_{\vtheta},\  \vtheta\in\Theta$:
\begin{equation}\label{eq:PGT}
    \nabla_{\vtheta} J_\pi = \EV_{\substack{s\sim d_{\mu,\pi}\\a\sim\pi_{\vtheta}(\cdot|s)}}\bigg[\nabla_{\vtheta}\log\pi_{\vtheta}(a|s)Q_\pi(s,a)\bigg].
\end{equation}
In the following, we omit the gradient subscript whenever clear from the context.
\section{Risk Measures}\label{sec:risk_measure}
This section introduces the concept of reward volatility, comparing it with the more common return variance.
The latter, denoted with $\sigma_{\pi}^2$, is defined as:
\begin{equation}
     \sigma_\pi^2  \coloneqq \EV_{\substack{s_0 \sim \mu \\a_t\sim\pi_{\vtheta}(\cdot|s_t)\\ s_{t+1}\sim \mathcal{P}(\cdot|s_t,a_t)}}\left[\left(\sum_{t=0}^\infty \gamma^t \mathcal{R}(s_t,a_t)-\frac{J_\pi}{1-\gamma}\right)^2\right].
     \label{eq:path_variance}
\end{equation}
In our case, it is useful to define \textit{reward volatility} $\nu^2_\pi$ in terms of the distribution $d_{\mu,\pi}$. As it is not possible to define the return variance in the same way, we also rewrite reward volatility as an expected sum over trajectories:\footnote{In finance, the term ``volatility'' refers to a generic measure of variation, often defined as a standard deviation. In this paper, volatility is defined as a variance.}
\begin{align}
    \nu^2_\pi & \coloneqq 
    \EV_{\substack{s\sim d_{\mu,\pi}\\a\sim\pi_{\vtheta}(\cdot|s)}}\left[\left(\mathcal{R}(s,a)-J_\pi\right)^2\right] \\ 
    & = (1-\gamma)\EV_{\substack{s_0 \sim \mu \\a_t\sim\pi_{\vtheta}(\cdot|s_t)\\ s_{t+1}\sim \mathcal{P}(\cdot|s_t,a_t)}}\left[\sum_{t=0}^\infty \gamma^t \left(\mathcal{R}(s_t,a_t)-J_\pi\right)^2\right].
    \label{eq:sigma} 
\end{align}
Once we have set a mean-variance parameter $\lambda$, the \textit{performance} or objective function related to the policy $\pi$ can be defined as:
\begin{equation}\label{eq:eta}
    \eta_\pi \coloneqq J_\pi - \lambda \nu^2_\pi,
\end{equation}
called \textit{mean-volatility} hereafter, where $\lambda\geq 0$ allows to trade-off expected return maximization with risk minimization. Similarly, the mean-variance objective is $J_{\pi}\big/(1-\gamma) - \lambda\sigma_{\pi}^2$.
An important result on the relationship between the two variance measures is the following:
\begin{restatable}[]{lemma}{variance}\label{thm:variance_ineq}
Consider the return variance $\sigma_\pi^2$ defined in Equation~\eqref{eq:path_variance} and the reward volatility $\nu^2_\pi$ defined in Equation~\eqref{eq:sigma}. The following inequality holds:
\begin{equation*}
 \sigma_\pi^2 \leq \frac{\nu^2_\pi}{(1-\gamma)^2} .
\end{equation*}
\end{restatable}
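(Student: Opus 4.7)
The plan is to reduce the inequality to a single application of Cauchy--Schwarz (equivalently, Jensen's inequality with respect to the geometric distribution $\{(1-\gamma)\gamma^t\}_{t\ge 0}$) after rewriting both sides so that the discount factors appear in a comparable form.

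First, I would rewrite the mean in the return variance. Since $J_\pi/(1-\gamma) = \sum_{t=0}^\infty \gamma^t J_\pi$, the inner expression in Equation~\eqref{eq:path_variance} becomes $\sum_{t=0}^\infty \gamma^t \bigl(\mathcal{R}(s_t,a_t) - J_\pi\bigr)$, so that
\begin{equation*}
\sigma_\pi^2 = \EV\!\left[\left(\sum_{t=0}^\infty \gamma^t \bigl(\mathcal{R}(s_t,a_t) - J_\pi\bigr)\right)^{\!2}\right],
\end{equation*}
where the expectation is over $s_0\sim\mu$, $a_t\sim\pi_{\vtheta}(\cdot|s_t)$, $s_{t+1}\sim\mathcal{P}(\cdot|s_t,a_t)$. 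This lines up the mean subtraction with the one that appears inside the trajectory form of $\nu_\pi^2$ in Equation~\eqref{eq:sigma}.

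Next, I would split $\gamma^t = \gamma^{t/2}\cdot\gamma^{t/2}$ and apply Cauchy--Schwarz pathwise:
\begin{equation*}
\left(\sum_{t=0}^\infty \gamma^{t/2}\cdot \gamma^{t/2}\bigl(\mathcal{R}(s_t,a_t)-J_\pi\bigr)\right)^{\!2} \le \left(\sum_{t=0}^\infty \gamma^t\right)\!\left(\sum_{t=0}^\infty \gamma^t\bigl(\mathcal{R}(s_t,a_t)-J_\pi\bigr)^2\right) = \frac{1}{1-\gamma}\sum_{t=0}^\infty \gamma^t\bigl(\mathcal{R}(s_t,a_t)-J_\pi\bigr)^2.
\end{equation*}
Taking expectation on both sides and recognising the right-hand factor as $\nu_\pi^2/(1-\gamma)$ via the trajectory formulation in Equation~\eqref{eq:sigma} gives $\sigma_\pi^2 \le \nu_\pi^2/(1-\gamma)^2$, which is the claim. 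Because the rewards are bounded by $\Rmax$, every sum converges absolutely, so Fubini lets us interchange sum and expectation without issue.

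The only real subtlety is choosing the right weighting in Cauchy--Schwarz: splitting $\gamma^t$ symmetrically as $\gamma^{t/2}\cdot\gamma^{t/2}$ is what simultaneously produces the factor $1/(1-\gamma)$ from the geometric series and preserves the $\gamma^t$ weighting inside the squared-deviation sum, exactly matching the definition of $\nu_\pi^2$. I do not anticipate any other obstacle; no use of the Markov structure, stationarity, or independence is required beyond the linearity of expectation.
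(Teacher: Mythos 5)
Your proof is correct and rests on the same key step as the paper's: Cauchy--Schwarz with the symmetric split $\gamma^t = \gamma^{t/2}\cdot\gamma^{t/2}$, which produces the geometric-series factor $1/(1-\gamma)$ while preserving the $\gamma^t$ weighting on the squared deviations. The only difference is cosmetic --- the paper first expands both variances to cancel the $J_\pi$ terms and compares uncentered second moments, whereas you absorb $J_\pi/(1-\gamma)=\sum_t\gamma^t J_\pi$ into the sum and apply Cauchy--Schwarz directly to the centered deviations, which is a slightly more streamlined route to the same inequality.
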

The proofs for this and other formal statements can be found in Appendix~\ref{app:proofs}. It is important to notice that the factor $(1-\gamma)^2$ simply comes from the fact that the return variance is not normalized, unlike the reward volatility (intuitively, volatility measures risk on a shorter time scale). What is lost in the reward volatility compared to the return variance are the inter-temporal correlations between the rewards.
However, Lemma~\ref{thm:variance_ineq} shows that the minimization of the reward volatility yields a low return variance. The opposite is clearly not true: a counterexample can be a stock price, having the same value at the beginning and at the end of a day, but making complex movements in-between.
\begin{figure}[t]
    \begin{center}
        \includegraphics[width=\columnwidth]{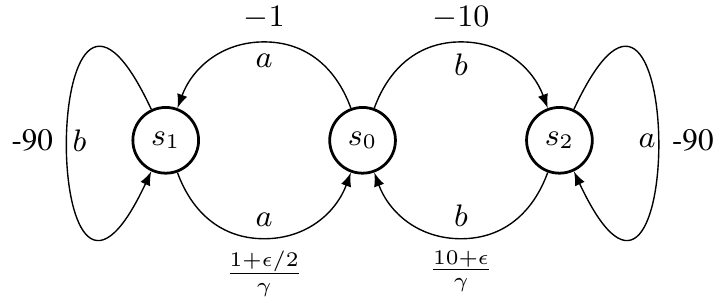}
        \caption{A deterministic MDP. The available actions are $a$ and $b$, $s_0$ is the initial state and rewards are reported on the arrows.}
        \label{fig:simpleMDP}
    \end{center}
\end{figure}
To better understand the difference between the two types of variance, consider the deterministic MDP in Figure \ref{fig:simpleMDP}. First assume $\epsilon = 0$. Every optimal policy (thus avoiding the $-90$ rewards) yields an expected return $J_\pi = 0$. The reward volatility of a deterministic policy 
that repeats the action $a$ is $\nu^2_{a} = \nicefrac{1}{\gamma}$ while the reward volatility of 
repeating the action $b$ is $\nu^2_{b} = \nicefrac{100}{\gamma}$. If we were minimizing the reward volatility, we would prefer the first policy, while we would be indifferent between the two policies based on the return variance ($\sigma_\pi^2$ is 0 in both cases). Now let us complicate the example, setting $\epsilon \in (0,1)$. The returns are now $J_{b} = \frac{\epsilon}{1+\gamma} > \frac{\epsilon/2}{1+\gamma} = J_{a}$. As a consequence, a mean-variance objective would always choose action $b$, since the return variance is still $0$, while the mean-volatility objective may choose the other path, depending on the value of the risk-aversion parameter $\lambda$. This simple example shows how the mean-variance objective can be insensitive to short-term risk (the $-10$ reward), even when the gain in expected return is very small in comparison ($\epsilon\simeq 0$). Instead, the mean-volatility objective correctly captures this kind of trade-off.
\section{Risk-Averse Policy Gradient}\label{sec:grad}
In this section, we derive a policy gradient theorem for the reward volatility $\nu_\pi^2$, and we propose an unbiased gradient estimator.
This will allow us to solve the optimization problem $\max_{\vtheta\in\Theta}\eta_{\vtheta}$ via stochastic gradient ascent.
We introduce a volatility equivalent of the action-value function $Q_\pi$ (Equation~\eqref{eq:Q_fun}), which is the volatility observed by starting from state $s$ and taking action $a$ thereafter:
\begin{equation}\label{eq:X_fun}
    X_\pi(s,a) \coloneqq \EV_{\substack{s_{t+1}\sim P(\cdot|s_t,a_t)\\a_{t+1}\sim\pi_{\vtheta}(\cdot|s_{t+1}.)}}\left[\sum_{t=0}^\infty \gamma^t (\mathcal{R}(s_t,a_t) - J_\pi)^2|s, a\right],
\end{equation}
called \emph{action-volatility} function.
 Like the $Q$ function, $X$ can be written recursively by means of a Bellman equation:
\begin{equation}
    X_\pi(s,a) = \big(R(s,a)-J_\pi\big)^2 + \gamma \EV_{\substack{s'\sim P(\cdot|s,a)\\a'\sim\pi_{\vtheta}(\cdot|s')}}\big[X_\pi(s',a')\big].
    \label{eq:bellmanV}
\end{equation}
The \emph{state-volatility} function, which is the equivalent of the $V$ function (Equation~\eqref{eq:V_fun}), can then be defined as:
\begin{align*}
     W_\pi(s) &\coloneqq  \EV_{\substack{a_t\sim\pi(\cdot|s_t)\\s_{t+1}\sim \mathcal{P}(\cdot|s_{t},a_{t})}}\left[\sum_{t=0}^\infty \gamma^t \left(\mathcal{R}(s_t,a_t)-J_\pi\right)^2|s\right] \\
     &=\EV_{a\sim\pi_{\vtheta}(\cdot|s)}\big[X_\pi(s,a)\big].
\end{align*}
This allows to derive a Policy Gradient Theorem (PGT) for $\nu_{\pi}^2$, as done in (Sutton et al. 2000) for the expected return:
\begin{restatable}[Reward Volatility PGT]{theorem}{gradient}\label{thm:variance_grad}
Using the definitions of action-variance and state-variance function, the variance term $\nu^2_\pi$ can be rewritten as:
\begin{equation}\label{eq:sigmatilde}
    \nu_\pi^2 = (1-\gamma)\int_\Sspace \mu(s)W_\pi(s)ds.
\end{equation}
Moreover, for a given policy $\pi_{\vtheta},\  \vtheta\in\Theta$:
\begin{equation*}
    \nabla\nu^2_\pi = \EV_{\substack{s\sim d_{\mu,\pi}\\a\sim\pi_{\vtheta}(\cdot|s)}}\bigg[\nabla\log\pi_{\vtheta}(a|s)X_\pi(s,a)\bigg].
\end{equation*}
\end{restatable}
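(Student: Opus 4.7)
The plan is to mirror the classical Sutton et al.\ proof of the PGT, but adapted to the squared-deviation reward $(\mathcal{R}(s,a)-J_\pi)^2$. The main complication to watch for is that $J_\pi$ itself depends on $\vtheta$, so that differentiating the deviation produces an extra term proportional to $\nabla J_\pi$; the heart of the argument is to check that this term cancels.

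First I would establish Equation~\eqref{eq:sigmatilde}. Starting from the trajectory formulation of $\nu^2_\pi$ in the second line of its definition, I would peel off the outer expectation over $s_0\sim\mu$, use the Markov property, and note that the remaining inner expectation is exactly $W_\pi(s_0)$ as defined. The factor $(1-\gamma)$ is preserved, giving $\nu^2_\pi = (1-\gamma)\int_{\Sspace}\mu(s)W_\pi(s)\de s$. This step is essentially a rewriting by definitions.

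For the gradient, I would apply $\nabla$ to the recursive Bellman equation~\eqref{eq:bellmanV}. Expanding $W_\pi(s)=\int \pi_\vtheta(a|s)X_\pi(s,a)\de a$ and differentiating, the log-trick produces the usual $\nabla\log\pi_\vtheta(a|s)X_\pi(s,a)$ term; differentiating $X_\pi$ itself yields both $\gamma\int \Tran(s'|s,a)\nabla W_\pi(s')\de s'$ and the troublesome contribution $-2(\mathcal{R}(s,a)-J_\pi)\nabla J_\pi$ coming from $(\mathcal{R}(s,a)-J_\pi)^2$. Unrolling the recursion as in Sutton et al.\ and integrating against $\mu$ gives
\begin{align*}
\nabla\nu^2_\pi
&= \EV_{\substack{s\sim d_{\mu,\pi}\\a\sim\pi_\vtheta(\cdot|s)}}\!\bigl[\nabla\log\pi_\vtheta(a|s)X_\pi(s,a)\bigr]\\
&\quad -2\,\nabla J_\pi\,\EV_{\substack{s\sim d_{\mu,\pi}\\a\sim\pi_\vtheta(\cdot|s)}}\!\bigl[\mathcal{R}(s,a)-J_\pi\bigr].
\end{align*}

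The key observation that closes the proof is that the second term vanishes: since $d_{\mu,\pi}$ is normalized (because of the $(1-\gamma)$ factor in its definition), one has $\EV_{s\sim d_{\mu,\pi},\,a\sim\pi_\vtheta}[\mathcal{R}(s,a)]=J_\pi$ by the alternative expression for $J_\pi$ recalled just before Equation~\eqref{eq:PGT}, and $\EV_{s\sim d_{\mu,\pi},\,a\sim\pi_\vtheta}[J_\pi]=J_\pi$ as well. Hence the bracketed expectation is zero and only the first term survives, giving exactly the claimed identity. The main obstacle is bookkeeping in the unrolling step and verifying that the $\nabla J_\pi$ cross-term is integrable and can be pulled out of the sum; once that is done the cancellation is immediate. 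I would defer the detailed unrolling manipulation, essentially identical to the risk-neutral PGT, to the appendix.
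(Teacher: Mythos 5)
Your proposal is correct and follows essentially the same route as the paper's proof: differentiate the Bellman recursion for $X_\pi$ and $W_\pi$, unroll it to express everything under $d_{\mu,\pi}$, and observe that the extra $-2\nabla J_\pi\,\EV_{s\sim d_{\mu,\pi},\,a\sim\pi_{\vtheta}}[\mathcal{R}(s,a)-J_\pi]$ term vanishes because $J_\pi=\EV_{s\sim d_{\mu,\pi},\,a\sim\pi_{\vtheta}}[\mathcal{R}(s,a)]$. The only cosmetic difference is that the paper packages the unrolling step as a separate recursion lemma (Lemma~\ref{lem:rec}) rather than repeating the Sutton-style telescoping inline.
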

The existence of a linear Bellman equation (\ref{eq:bellmanV}) and the consequent policy gradient theorem are two non-trivial theoretical advantages of the new reward volatility approach with respect to the return variance. 
With a simple extension it is possible to obtain the policy gradient theorem for the mean-volatility objective defined in equation (\ref{eq:eta}). The action value and state value functions are obtained by combining the action value functions of the expected return (\ref{eq:Q_fun}) and of the volatility (\ref{eq:X_fun}); the same holds for the state value functions:
\begin{align*}
    Q^\lambda_\pi(s,a) &\coloneqq Q_\pi(s,a)-\lambda X_\pi(s,a) \\
    V^\lambda_\pi(s) &= V_\pi(s) -\lambda W_\pi(s)
\end{align*}
The policy gradient theorem thus states:
\begin{equation*}
    \nabla\eta_\pi = \EV_{\substack{s\sim d_{\mu,\pi}\\a\sim\pi_{\vtheta}(\cdot|s)}}\bigg[\nabla\log\pi_{\vtheta}(a|s)Q^\lambda_\pi(s,a)\bigg].
\end{equation*}
In the following part, we use these results to design a VOLatility-Averse Policy Gradient algorithm (VOLA-PG).

\subsection{Estimating the risk-averse policy gradient}\label{sec:estimator}
To design a practical actor-only policy gradient algorithm, the action-value function $Q_\pi$ needs to be estimated as in~\citep{Suttonbook:1998,peters2008reinforcement}.
Similarly, we need an estimator for the action-variance function $X_\pi$. In this approximate framework, we consider to collect $N$ finite trajectories $s_0^i, a_0^i, ... , s^i_{T-1}, a^i_{T-1}, \  i = 0,\dots,N-1$ per each policy update. An unbiased estimator of $J_\pi$ can be defined as:
\begin{equation}
   \hat{J} =  \frac{1-\gamma}{1-\gamma^{T}}\frac{1}{N} \sum_{i=0}^{N-1} \sum_{t=0}^{T-1} \gamma^{t}\mathcal{R}^i_{t}
    \label{eq:J_hat},
\end{equation}
where rewards are denoted as $\mathcal{R}_t^i = \mathcal{R}(s_t^i, a_t^i)$.
This can be used to compute an estimator for the action-volatility function:
\begin{restatable}[]{lemma}{xestimator}\label{lemma:X_estimate}
Let $\widehat{X}$ be the following estimator for the action-volatility function:
\begin{equation}
    \widehat{X} = \frac{1-\gamma}{1-\gamma^{T}}\frac{1}{N} \sum_{i=0}^{N-1} \sum_{t=0}^{T-1} \gamma^t \left[(\mathcal{R}_t^i-\hat{J}_1)(\mathcal{R}_t^i-\hat{J}_2)\right].
    \label{eq:X_hat}
\end{equation}
where $\hat{J}_1$ and $\hat{J}_2$, defined as in Equation~\eqref{eq:J_hat}, are taken from two different sets of trajectories $ \mathcal{D}_1$ and $\mathcal{D}_2$, and a third set of samples $\mathcal{D}_3$ is used for the rewards $\mathcal{R}_t^i$ in Equation~\eqref{eq:X_hat}.\\
Then, $\widehat{X}$ is unbiased.
\end{restatable}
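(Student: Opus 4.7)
The plan is to expand the quadratic factor $(\mathcal{R}_t^i - \hat{J}_1)(\mathcal{R}_t^i - \hat{J}_2)$, push the expectation through by linearity, and then exploit the three-way independence of $\mathcal{D}_1$, $\mathcal{D}_2$, $\mathcal{D}_3$ to eliminate every cross-term cleanly. Expanding gives
$$(\mathcal{R}_t^i - \hat{J}_1)(\mathcal{R}_t^i - \hat{J}_2) = (\mathcal{R}_t^i)^2 - \mathcal{R}_t^i(\hat{J}_1 + \hat{J}_2) + \hat{J}_1\hat{J}_2.$$
Because $\mathcal{R}_t^i$ is drawn from $\mathcal{D}_3$ while $\hat{J}_1$ and $\hat{J}_2$ are computed on the disjoint sets $\mathcal{D}_1, \mathcal{D}_2$, we have $\EV[\mathcal{R}_t^i \hat{J}_k] = \EV[\mathcal{R}_t^i]\,\EV[\hat{J}_k]$ for $k=1,2$, and crucially $\EV[\hat{J}_1 \hat{J}_2] = \EV[\hat{J}_1]\EV[\hat{J}_2]$. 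This factorization is precisely the reason for the two-set construction: reusing a single $\hat{J}$ in both slots would inject a spurious $\Var(\hat{J})$ bias via $\EV[\hat{J}^2] = J_\pi^2 + \Var(\hat{J})$.

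The preliminary step is to verify $\EV[\hat{J}_k] = J_\pi$. Taking expectation inside Equation~\eqref{eq:J_hat} yields
$$\EV[\hat{J}] = \frac{1-\gamma}{1-\gamma^T}\sum_{t=0}^{T-1}\gamma^t\, \EV[\mathcal{R}(s_t,a_t)],$$
and the state-occupancy form $J_\pi = \EV_{s\sim d_{\mu,\pi}, a\sim\pi}[\mathcal{R}(s,a)] = (1-\gamma)\sum_{t=0}^{\infty}\gamma^t \EV[\mathcal{R}(s_t,a_t)]$ identifies this with $J_\pi$, with the truncation beyond $T$ absorbed by the normalizing factor $\frac{1-\gamma}{1-\gamma^T}$ in place of $1-\gamma$ (the standard finite-horizon-sample convention used here).

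Plugging these identities back in collapses the mixed terms into $-2 J_\pi \EV[\mathcal{R}_t^i] + J_\pi^2$, which combines with $\EV[(\mathcal{R}_t^i)^2]$ to recover exactly $\EV[(\mathcal{R}_t^i - J_\pi)^2]$ inside the double sum. This gives
$$\EV[\widehat{X}] = \frac{1-\gamma}{1-\gamma^T}\sum_{t=0}^{T-1}\gamma^t\, \EV\!\left[(\mathcal{R}(s_t,a_t) - J_\pi)^2\right],$$
which under the same convention matches the trajectory form of $\nu_\pi^2$ in Equation~\eqref{eq:sigma} (equivalently, the $\mu$-averaged action-volatility scaled by $1-\gamma$, since $\EV_{s_0\sim\mu, a_0\sim\pi}[X_\pi(s_0,a_0)] = \nu_\pi^2/(1-\gamma)$ by Theorem~\ref{thm:variance_grad}).

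Nothing in the argument is conceptually hard; the only real pitfall is bookkeeping the three-way independence of $\mathcal{D}_1, \mathcal{D}_2, \mathcal{D}_3$. If any pair is allowed to share samples, the factorization $\EV[\hat{J}_1\hat{J}_2] = J_\pi^2$ fails and a $\Var(\hat{J})$-sized bias appears, so the whole point of the lemma is to show that this particular data-splitting scheme exchanges that bias for a variance cost that vanishes with $N$.
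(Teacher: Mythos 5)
Your proof is correct and follows essentially the same route as the paper's: both arguments reduce to the two facts that $\EV[\hat{J}_k]=J_\pi$ and that the three-way independence of $\mathcal{D}_1,\mathcal{D}_2,\mathcal{D}_3$ lets the expectation factor across the two $\hat{J}$ factors, so that $(\mathcal{R}_t^i-\hat{J}_1)(\mathcal{R}_t^i-\hat{J}_2)$ averages to $(\mathcal{R}_t^i-J_\pi)^2$. The only cosmetic difference is that you expand the product into four terms before factoring, whereas the paper conditions on $\mathcal{D}_3$ and factors $\EV_{\tau_1}[\cdot]\,\EV_{\tau_2}[\cdot]$ directly; your closing identification of the resulting quantity with $\nu_\pi^2/(1-\gamma)$ averaged over $\mu$ is, if anything, stated more carefully than the paper's.
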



Note that, in order to obtain an unbiased estimator for $X$, a \textit{triple sampling} procedure is needed. This may result in being very restrictive. However, by adopting single sampling instead, the bias introduced is equivalent to the variance of $\widehat{J}$, so the estimator is still consistent.
This result can be used to build a consistent estimator for the gradient $\nabla\eta_\pi$:\footnote{Obtained by adopting single sampling.}
\begin{equation}\label{eq:approx_gradient}
\begin{aligned}
    \widehat{\nabla}_N
    \eta_\pi=& \frac{1}{N}\sum_{i=0}^{N-1}\sum_{t=0}^{T-1}\gamma^t\bigg(\sum_{t'=t}^{T-1}\gamma^{t' - t}\big[R_{t'}^i-\lambda\frac{1-\gamma}{1-\gamma^T}\\ &\ (R_{t'}^i-\widehat{J})^2 \big]\bigg)\nabla\log\pi_{\vtheta}(a_t^i|s_t^i).
    \end{aligned}
\end{equation}
This is just a PGT \citep{sutton2000policy} estimator with $R_{t'}^i-\lambda\frac{1-\gamma}{1-\gamma^T}(R_{t'}^i-\widehat{J})^2$ in place of the simple reward. As shown in \citep{peters2008reinforcement}, this can be turned into a GPOMDP estimator, for which variance-minimizing baselines can be easily computed.
Pseudocode for the resulting actor-only policy gradient method is reported in Algorithm~\ref{alg:pg}.


\begin{algorithm}[t]
\caption{Volatility-Averse Policy Gradient (VOLA-PG)}
\label{alg:pg}
	\begin{algorithmic}

		\State \textbf{Input:} initial policy parameter $\vtheta_0$, batch size $N$, number of iterations $K$, learning-rate $\alpha$.
		\For{$k=0,\dots,K-1$}
			\State Collect $N$ trajectories with $\vtheta_k$ to obtain dataset $\Dataset_{N}$
			\State Compute estimates $\widehat{J}$ as in Equation~\eqref{eq:J_hat}
			\State Estimate gradient $\widehat{\nabla}_N\eta_{\vtheta_k}$ as in Equation~\eqref{eq:approx_gradient}
			\State Update policy parameters as $\vtheta_{k+1} \gets \vtheta_k + \alpha\hnabla_N \eta_{\vtheta_k}$
		\EndFor
	\end{algorithmic}
\end{algorithm}
\section{Trust Region Volatility Optimization}\label{sec:TRVO}
In this section, we go beyond the standard policy gradient theorem and show it is possible to guarantee a monotonic improvement of the mean-volatility performance measure~\eqref{eq:eta} at each policy update. Safe (in the sense of non-pejorative) updates are of fundamental importance when learning online on a real system, such as when controlling a robot or when trading in the financial markets; but also give interesting results during offline training, guaranteeing an automatic tuning of the optimal learning rate.
While the mean-volatility objective ensures a risk averse \emph{behavior} of the policy, the safe update ensures a risk averse \emph{update} of the paramenters of the policy. Thus, if we care about the agent's performance within the learning process, we must emphasize the importance of the step sizes at each update of the parameters. Adapting the approach in \citep{schulman2015trust} to our mean-volatility objective, we show it is possible to obtain a learning rate that guarantees that the performance of the updated policy is bounded with respect to the previous policy. The safe update is based on the advantage function, defined as the difference between the action value and state value function, because of the linearity of the Bellman equations, we can extend the definition to obtain the mean-volatility advantage function:
\begin{equation}\label{eq:B-dvantage}
\begin{aligned}
A^\lambda_\pi(s,a) =&  Q^\lambda_\pi(s,a) - V^\lambda_\pi(s) \\
                =& \EV_{s'\sim P(\cdot|s,a)}\big[\mathcal{R}(s,a) - \lambda (\mathcal{R}(s,a)-J_\pi)^2 \\
                &\qquad  \qquad +\gamma V_\pi^\lambda(s')-V_\pi^\lambda(s)\big]
\end{aligned}
\end{equation}
Furthermore, with the mean-volatility objective all the theoretical results leading to the \textit{TRPO} algorithm hold. The main ones are in this section, but the full derivation can be found in Appendix~\ref{app:proofs}. Lemma \ref{lemma:ADV_kakade} is a $\lambda$-extension of Lemma 6.1 in \citep{kakade2002approximately}:\footnote{The only slight difference is in the normalization term, consequent to the choice of different definitions.}
\begin{restatable}[Performance Difference Lemma]{lemma}{ADVlambda}\label{lemma:ADV_kakade}
The difference of the performances between to two policies $\pi$ and $\widetilde{\pi}$ can be expressed in terms of expected advantage:
\begin{equation}\label{eq:delta_eta}
\begin{aligned}
\eta_{\widetilde{\pi}} = &\eta_\pi + \int_\Sspace d_{\mu,\widetilde{\pi}}(s)\int_\Aspace\widetilde{\pi}(a|s)A_\pi^\lambda(s,a)\de a \de s +\\
+&\lambda(1-\gamma)^2\bigg[\int_\Sspace d_{\mu, \widetilde{\pi}}(s)\int_\Aspace\widetilde{\pi}(a|s)A_\pi(s,a)\de a \de s\bigg]^2.\end{aligned}\end{equation}\end{restatable}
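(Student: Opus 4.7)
The plan is to reduce the claim to two applications of the standard Kakade--Langford performance difference lemma: one for the expected return $J_\pi$ and one for the reward volatility $\nu_\pi^2$, which we recast as the expected return of a surrogate MDP.

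First I would expand $\nu_{\widetilde{\pi}}^2$ by adding and subtracting $J_\pi$ inside the square: writing
\begin{equation*}
(\mathcal{R}(s,a) - J_{\widetilde{\pi}})^2 = (\mathcal{R}(s,a) - J_\pi)^2 + 2(\mathcal{R}(s,a) - J_\pi)(J_\pi - J_{\widetilde{\pi}}) + (J_\pi - J_{\widetilde{\pi}})^2
\end{equation*}
and taking expectation under $s\sim d_{\mu,\widetilde{\pi}},\ a\sim\widetilde{\pi}$, the cross-term collapses to $-2(J_{\widetilde{\pi}} - J_\pi)^2$, since $\EV_{s\sim d_{\mu,\widetilde{\pi}},a\sim\widetilde{\pi}}[\mathcal{R}(s,a)] = J_{\widetilde{\pi}}$. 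This yields the clean identity $\nu_{\widetilde{\pi}}^2 = \EV_{s\sim d_{\mu,\widetilde{\pi}},a\sim\widetilde{\pi}}[(\mathcal{R}(s,a) - J_\pi)^2] - (J_{\widetilde{\pi}} - J_\pi)^2$, which isolates a ``same-reward'' expectation that does not depend on $J_{\widetilde{\pi}}$.

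Next I would observe that $g(s,a) \coloneqq (\mathcal{R}(s,a) - J_\pi)^2$ is a bona fide fixed reward function: it does not depend on $\widetilde{\pi}$, because $J_\pi$ is a scalar frozen by the old policy. By the Bellman recursion~\eqref{eq:bellmanV}, the action-value and state-value functions of the surrogate MDP with reward $g$ are exactly $X_\pi$ and $W_\pi$, and its normalized expected return coincides with $\nu_\pi^2$. Applying the standard PDL to this surrogate MDP gives
\begin{equation*}
\EV_{s\sim d_{\mu,\widetilde{\pi}},a\sim\widetilde{\pi}}[g(s,a)] - \nu_\pi^2 = \int_\Sspace d_{\mu,\widetilde{\pi}}(s)\int_\Aspace \widetilde{\pi}(a|s)\big[X_\pi(s,a) - W_\pi(s)\big]\de a\de s,
\end{equation*}
while the PDL applied to the original reward $\mathcal{R}$ gives the familiar $J_{\widetilde{\pi}} - J_\pi = \int_\Sspace d_{\mu,\widetilde{\pi}}(s)\int_\Aspace \widetilde{\pi}(a|s) A_\pi(s,a)\de a\de s$. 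Assembling $\eta_{\widetilde{\pi}} - \eta_\pi = (J_{\widetilde{\pi}} - J_\pi) - \lambda(\nu_{\widetilde{\pi}}^2 - \nu_\pi^2)$, I would fold the two integrands into one via the elementary identity $A_\pi^\lambda = A_\pi - \lambda(X_\pi - W_\pi)$, which follows directly from the definitions of $Q_\pi^\lambda$ and $V_\pi^\lambda$. The leftover $+\lambda(J_{\widetilde{\pi}} - J_\pi)^2$, rewritten using the PDL expression for $J_{\widetilde{\pi}} - J_\pi$, produces the squared term of~\eqref{eq:delta_eta}, the normalization prefactor emerging from how the $(1-\gamma)$ built into $d_{\mu,\widetilde{\pi}}$ and $J_\pi$ interacts with the squaring step.

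The main obstacle I anticipate is conceptual rather than computational: one must be comfortable treating $g$ as a legitimate reward function so that the PDL may be invoked ``off the shelf'', and one must handle the fact that, unlike $J$, the objective $\nu_\pi^2$ is not simply the expectation of the surrogate return because of the shift by $J_{\widetilde{\pi}}$ rather than $J_\pi$ --- this is what produces the extra quadratic correction. The remaining work is book-keeping: tracking the $(1-\gamma)$ factors implicit in the normalized occupancy and normalized return conventions to land on exactly the coefficient stated in the lemma.
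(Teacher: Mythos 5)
Your route is genuinely different from the paper's and is sound in substance. The paper works directly on the trajectory expansion of $\eta_{\widetilde{\pi}}$: it telescopes $V^\lambda_\pi$ along the trajectory and then substitutes Kakade's identity $J_{\widetilde{\pi}} = J_\pi + (1-\gamma)\EV_{\tau|\widetilde{\pi}}\big[\sum_t\gamma^t A_\pi(s_t,a_t)\big]$ \emph{inside} the square $(R_t-J_{\widetilde{\pi}})^2$, expands into three terms, and recombines using $\EV_{\tau|\widetilde{\pi}}\big[\sum_t\gamma^t(R_t-J_\pi)\big]=\EV_{\tau|\widetilde{\pi}}\big[\sum_t\gamma^t A_\pi(s_t,a_t)\big]$. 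You instead isolate the dependence on $J_{\widetilde{\pi}}$ up front via the exact identity $\nu^2_{\widetilde{\pi}} = \EV_{s\sim d_{\mu,\widetilde{\pi}},\,a\sim\widetilde{\pi}}\big[(\mathcal{R}(s,a)-J_\pi)^2\big]-(J_{\widetilde{\pi}}-J_\pi)^2$ (your cross-term computation is correct), and then invoke the standard performance difference lemma twice: once for $\mathcal{R}$ and once for the frozen surrogate reward $g=(\mathcal{R}-J_\pi)^2$, whose value functions under $\pi$ are exactly $X_\pi$ and $W_\pi$ by Equation~\eqref{eq:bellmanV}. Together with $A^\lambda_\pi = A_\pi-\lambda(X_\pi-W_\pi)$ this assembles into the claim. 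Your version is arguably cleaner: it makes explicit that the only obstruction to an off-the-shelf PDL is the recentering of the square from $J_\pi$ to $J_{\widetilde{\pi}}$, and it quarantines that obstruction in a single exact quadratic correction rather than expanding inside a trajectory sum.

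The one place you gloss is precisely where care is needed: the coefficient of the squared term. Carried through honestly, your bookkeeping gives $(J_{\widetilde{\pi}}-J_\pi)^2=\big[\int_\Sspace d_{\mu,\widetilde{\pi}}(s)\int_\Aspace\widetilde{\pi}(a|s)A_\pi(s,a)\de a\de s\big]^2$ with \emph{no} extra factor, since the $(1-\gamma)$ is already absorbed into the normalized occupancy measure; so your argument yields $+\lambda[\cdots]^2$, not $+\lambda(1-\gamma)^2[\cdots]^2$, and the closing claim that the prefactor ``emerges from the squaring step'' does not hold up. This is not a defect of your mathematics: the paper's own proof terminates with $\eta_{\widetilde{\pi}}-\eta_\pi=(1-\gamma)\EV_{\tau|\widetilde{\pi}}\big[\sum_t\gamma^tA_\pi^\lambda(s_t,a_t)\big]+\lambda(1-\gamma)^2\big(\EV_{\tau|\widetilde{\pi}}\big[\sum_t\gamma^tA_\pi(s_t,a_t)\big]\big)^2$, which, rewritten with the normalized $d_{\mu,\widetilde{\pi}}$, is exactly your $\lambda[\cdots]^2$; the $(1-\gamma)^2$ in the lemma's display is consistent only if the squared bracket is read with unnormalized occupancy weights. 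State the coefficient your derivation actually produces rather than reverse-engineering the one in the display.
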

This result is very interesting, since the last term adds a gain related to the square of the difference of the expected returns of the policies; therefore there is a bonus if the expected return of the second policy is either higher or lower than the first one. However, this is a difficult term to consider, and we can bound the performance difference by neglecting it; in this case the bound becomes:
\begin{equation}\label{eq:delta_eta_approx}
    \eta_{\widetilde{\pi}} \ge \eta_\pi + \int_\Sspace d_{\mu,\widetilde{\pi}}(s)\int_\Aspace\widetilde{\pi}(a|s)A_\pi^\lambda(s,a)\de a \de s
\end{equation}
This result is the one that could be obtained considering a transformation of the MDP with a new, policy-dependent reward
$R_\pi^\lambda(s_t, a_t) = R(s_t, a_t)-\lambda(R(s_t, a_t) -J_\pi)^2$. In any case, the reader must be careful not to underestimate the choice to adopt a reward of this kind, since, in general, for policy-dependent rewards the PGT in Equation~\eqref{eq:PGT} and the performance bound in Equation ~\eqref{eq:delta_eta_approx} do not hold without proper assumptions.

Following the approach proposed in ~\citep{schulman2015trust}, it is possible to adopt an approximation $L_\pi^\lambda(\widetilde{\pi})$ of the surrogate function, which provides monotonic improvement guarantees by considering the KL divergence between the policies:
\begin{restatable}[]{theorem}{TRVO}\label{thm:TRVO}
Consider the following approximation of $\eta_{\widetilde{\pi}}$, replacing the state-occupancy density of the old policy $d_{\mu, \pi}$:
\begin{equation}\label{eq:L_lambda}
L_\pi^\lambda(\widetilde{\pi}) \coloneqq \eta_\pi + \int_\Sspace d_{\mu, \pi}(s)\int_\Aspace \widetilde{\pi}(a|s) A_\pi^\lambda(s,a)\de a \de s;
\end{equation}
Let \begin{align*}
\alpha &= D^{max}_{KL}(\pi,\widetilde{\pi}) = \max_{s} D_{KL}(\pi(\cdot|s), \widetilde{\pi}(\cdot|s))\\ \epsilon& = \max_s |\EV_{a\sim\widetilde{\pi}(\cdot|s)}\big[ A^\lambda_\pi(s,a)\big]|\end{align*}
 Then, the performance of $\widetilde{\pi}$ can be bounded as follows:\footnote{Comparing this bound to the results shown in ~\citep{schulman2015trust}, the denominator term is not squared due to return normalization.}
\begin{equation}\label{eq:TRPOVOLA-bound}
\begin{aligned}
    \eta_{\widetilde{\pi}} \ge L^\lambda_\pi(\widetilde{\pi}) - \frac{2\epsilon\gamma}{1-\gamma} D^{max}_{KL}(\pi, \widetilde{\pi}).
\end{aligned}
\end{equation}
\end{restatable}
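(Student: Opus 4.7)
The plan is to follow the TRPO derivation of Schulman et al., adapting each ingredient to the mean-volatility performance $\eta_\pi$. The goal is to bound the mismatch between $\eta_{\widetilde{\pi}}$ and its surrogate $L_\pi^\lambda(\widetilde{\pi})$, which differ only in that the state occupancy $d_{\mu,\widetilde{\pi}}$ is replaced by $d_{\mu,\pi}$, in terms of how far $\widetilde{\pi}$ is from $\pi$.

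First, I would invoke Lemma~\ref{lemma:ADV_kakade}. Since $\lambda \ge 0$ and the ``bonus'' term is the square of an expected advantage, it is nonnegative and can be dropped, yielding
\[
\eta_{\widetilde{\pi}} \ge \eta_\pi + \int_\Sspace d_{\mu,\widetilde{\pi}}(s)\,\bar A(s)\,\de s,
\]
where $\bar A(s) := \int_\Aspace \widetilde{\pi}(a|s)\,A_\pi^\lambda(s,a)\,\de a$. Subtracting $L_\pi^\lambda(\widetilde{\pi})$, which is the same expression with $d_{\mu,\pi}$ in place of $d_{\mu,\widetilde{\pi}}$, isolates the state-occupancy mismatch:
\[
\eta_{\widetilde{\pi}} - L_\pi^\lambda(\widetilde{\pi}) \ge \int_\Sspace \bigl[d_{\mu,\widetilde{\pi}}(s) - d_{\mu,\pi}(s)\bigr]\bar A(s)\,\de s.
\]

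Next, by the definition of $\epsilon$ we have $\|\bar A\|_\infty \le \epsilon$, so H\"older's inequality reduces the problem to controlling $\|d_{\mu,\widetilde{\pi}} - d_{\mu,\pi}\|_1$. I would bound this by coupling the two Markov chains started from $\mu$: using a maximal coupling of $\pi(\cdot|s)$ and $\widetilde{\pi}(\cdot|s)$ at every state, the probability that the two chains have differed within the first $t$ steps is at most $1-(1-\alpha_{TV})^t$, with $\alpha_{TV} := \max_s \|\pi(\cdot|s)-\widetilde{\pi}(\cdot|s)\|_{TV}$, so $\|P_t^\pi - P_t^{\widetilde{\pi}}\|_1 \le 2\bigl(1-(1-\alpha_{TV})^t\bigr)$. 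Multiplying by $(1-\gamma)\gamma^t$ and summing the resulting (essentially geometric) series produces a bound on $\|d_{\mu,\widetilde{\pi}} - d_{\mu,\pi}\|_1$ that scales like $\gamma\,\alpha_{TV}/(1-\gamma)$; note that the single $(1-\gamma)$ in the denominator (as opposed to $(1-\gamma)^2$ in Schulman et al.) is precisely the normalization effect mentioned in the footnote.

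The last step is Pinsker's inequality, which converts $\alpha_{TV}$ into $D^{max}_{KL}(\pi,\widetilde{\pi})$, producing the bound in the stated form. The main obstacle I expect is the occupancy-distance estimate: one must keep careful track of the $(1-\gamma)$ normalization built into $d_{\mu,\pi}$ throughout the geometric sum, and verify that the maximal-coupling construction extends cleanly to the continuous state and action spaces considered here. The remaining ingredients—dropping the nonnegative square, H\"older, Pinsker—are essentially mechanical and parallel the risk-neutral case, so the only genuine work lies in adapting the Schulman-style occupancy bound to the present normalized setting.
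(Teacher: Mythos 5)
Your overall strategy is the right one, and it is essentially what the paper intends: the appendix only proves the Performance Difference Lemma (Lemma~\ref{lemma:ADV_kakade}) and then defers the remaining steps to Schulman et al., so dropping the nonnegative squared-advantage term, isolating the occupancy mismatch, applying H\"older with $\|\bar A\|_\infty\le\epsilon$, and bounding $\|d_{\mu,\widetilde{\pi}}-d_{\mu,\pi}\|_1\le \frac{2\gamma}{1-\gamma}\alpha_{TV}$ by a maximal coupling is exactly the intended skeleton. Your occupancy estimate, including the bookkeeping of the $(1-\gamma)$ normalization, is correct.

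The gap is in the very last step. Your chain of inequalities delivers
\begin{equation*}
\eta_{\widetilde{\pi}}\;\ge\; L^\lambda_\pi(\widetilde{\pi})-\frac{2\epsilon\gamma}{1-\gamma}\,\alpha_{TV},\qquad \alpha_{TV}=\max_s D_{TV}\bigl(\pi(\cdot|s),\widetilde{\pi}(\cdot|s)\bigr),
\end{equation*}
with the total-variation distance appearing to the \emph{first} power. Pinsker then gives $\alpha_{TV}\le\sqrt{D^{max}_{KL}/2}$, so what you actually prove is a penalty proportional to $\sqrt{D^{max}_{KL}}$, not to $D^{max}_{KL}$; since $\sqrt{x/2}>x$ whenever $x<1/2$, this is strictly weaker than the stated bound precisely in the small-KL regime where a trust-region method operates, so the inequality in the theorem does not follow from your argument. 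To obtain a penalty linear in the KL divergence you must make the TV distance appear squared, which requires one more use of the coupling: because $\EV_{a\sim\pi(\cdot|s)}\big[A^\lambda_\pi(s,a)\big]=0$, one has $|\EV_{a\sim\widetilde{\pi}(\cdot|s)}\big[A^\lambda_\pi(s,a)\big]|\le 2\alpha_{TV}\max_{s,a}|A^\lambda_\pi(s,a)|$, which trades your $\epsilon$ for $\max_{s}\max_a|A^\lambda_\pi(s,a)|$ (the quantity the paper actually uses inside Algorithm~\ref{alg:trvo}) and yields $\alpha_{TV}^2\le D^{max}_{KL}/2$, recovering the constant $\frac{2\epsilon\gamma}{1-\gamma}$. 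As written, with $\epsilon=\max_s|\EV_{a\sim\widetilde{\pi}(\cdot|s)}[A^\lambda_\pi(s,a)]|$, your argument (and, arguably, the theorem statement itself) only supports the square-root dependence on the KL divergence.
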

As a consequence, we can devise a trust-region variant of VOLA-PG, called TRVO (Trust Region Volatility Optimization) as outlined in Algorithm \ref{alg:trvo}. Note that this would not have been possible without the linear Bellman equations, which are not available for more common risk measures. For the practical implementation, we follow~\citep{schulman2015trust}.

\begin{algorithm}[t]
\caption{Trust Region Volatility Optimization (TRVO)}
\label{alg:trvo}
	\begin{algorithmic}
		\State \textbf{Input:} initial policy parameter $\vtheta_0$, batch size $N$, number of iterations $K$, discount factor $\gamma$.
		\For{$k=0,\dots,K-1$}
			\State Collect $N$ trajectories with $\vtheta_k$ to obtain dataset $\Dataset_{N}$
			\State Compute estimates $\widehat{J}$ as in Equation~\eqref{eq:J_hat}
			\State Estimate advantage values $A^\lambda_{\theta_k}(s,a)$ 
			\State Solve the constrained optimization problem 
			\begin{equation*}
			\begin{multlined}
			    \vtheta_{k+1} = \argmax_{\vtheta\in\Theta}\big[ L^\lambda_{k}(\vtheta) - \frac{2\epsilon\gamma}{1-\gamma} D_{KL}^{max}(\pi_{\vtheta_{k}}, \pi_{\vtheta})\big]\\
			    \text{where  }  \epsilon = \max_s\max_a|A^\lambda_{\vtheta_{k}}(s,a)|\\
			     L^\lambda_{k}(\vtheta) = \eta_{\vtheta_{k}} +  \pi_{\vtheta_{k}}(a|s)\EV_{\substack{s\sim d_{\mu, \pi_{\vtheta_{k}}}\\a\sim\pi_{\vtheta_{k}}(\cdot|s)}}A_{\vtheta_{k}}^\lambda(s,a)
			    \end{multlined}
		    \end{equation*}
		\EndFor
	\end{algorithmic}
\end{algorithm}

\section{Related Works}\label{sec:related}
\begin{figure*}[t]
    \begin{minipage}{0.48\textwidth}
          \begin{figure}[H]
      \includegraphics[width=0.9\columnwidth]{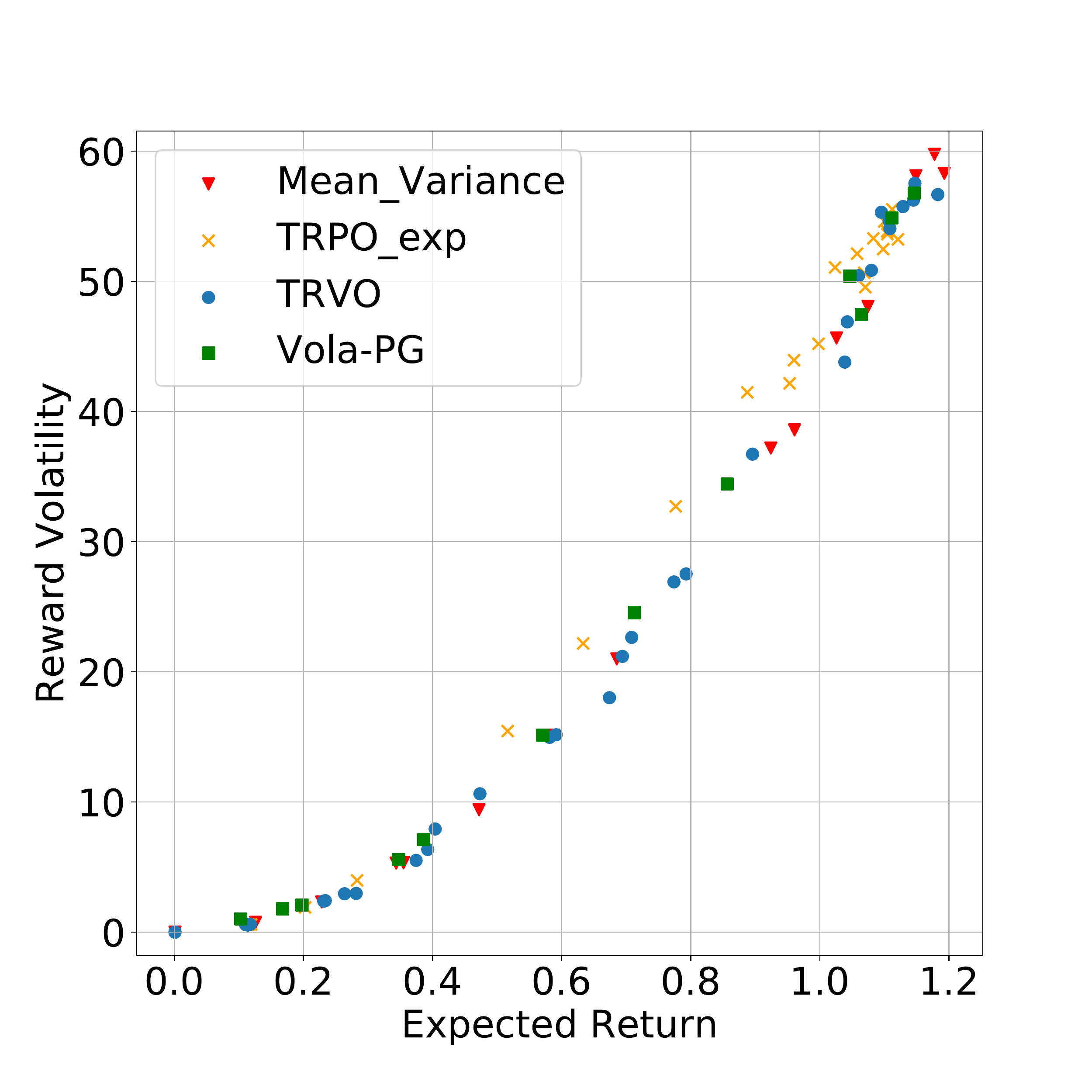}

  \end{figure}

    \end{minipage}
    \begin{minipage}{0.48\textwidth}
      \begin{figure}[H]
      \includegraphics[width=0.9\columnwidth]{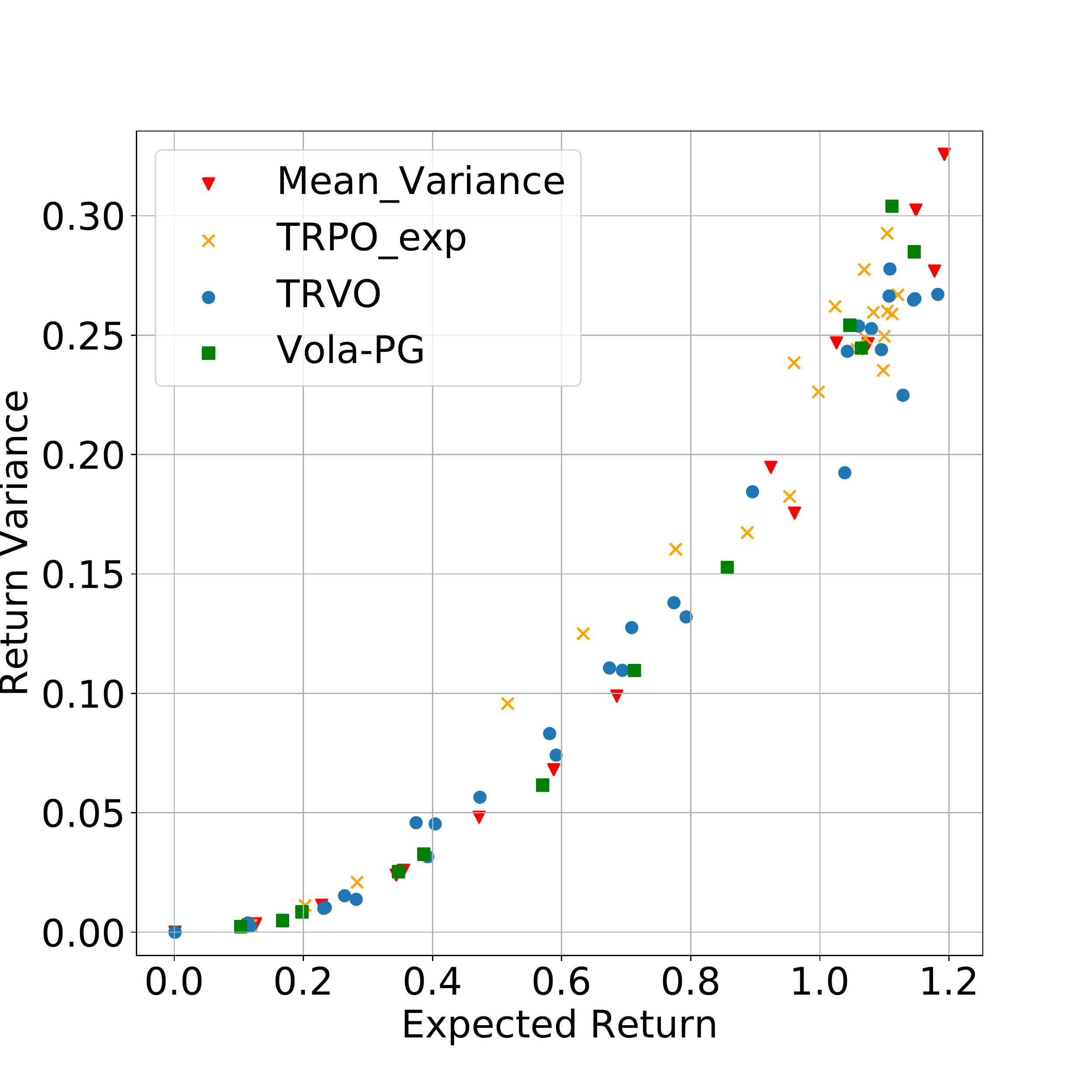}
      
      \label{fig:pareto_tamar_var}
        \end{figure}
    \end{minipage}
      \caption{Portfolio Management: comparison of the expected return, reward volatility and return variance obtained with four different learning algorithms: TRVO, TRPO-exp, Vola-PG and Mean-Variance.}
\label{fig:pareto_tamar}
\end{figure*}
Two streams of RL literature have been merged together in this paper (for the first time, to the best of our knowledge): risk-averse objective functions and safe policy updates. As stated in~\citep{Tamar_variance,garcia15a}, these two themes are related as they both reduce risk.
The first is the reduction of inherent risk, which is generated by the stochastic nature of the environment, while the second is the reduction of model risk, which is related to the imperfect knowledge of the model parameters.

Several risk criteria have been taken into consideration in literature, such as coherent risk measures~\cite{tamar_policy_2015}, utility functions~\cite{shen_risk-averse_2014} and return variance \citep{tamar_variance_2013} (defined in Equation~\eqref{eq:path_variance}); the latter is the most studied and the closest to reward volatility.
Typically, the risk-averse Bellman equation for the return variance defined by Sobel~\citep{sobel_variance_1982} is considered. 
Unlike Equation~\eqref{eq:bellmanV}, Sobel's equation does not satisfy the monotonicity property of dynamic programming. This has lead to a variety of complex approaches to the return-variance minimization \citep{Tamar_variance,tamar_variance_2013,pra-ghav,prashanth_actor-critic_2014}.
The popular \textit{mean-variance} objective $J_\pi - \beta \sigma_\pi^2$ is similar to the one considered in this paper (Equation ~\eqref{eq:eta}), with the return variance instead of the reward volatility. An actor-critic algorithm that considers this objective function is presented in~\citep{tamar_variance_2013,prashanth_actor-critic_2014,pra-ghav}, the latter also proposing an actor-only policy gradient. Sometimes, the objective is slightly modified by maximizing the expected value of the return while constraining the variance, such as in ~\citep{Tamar_variance}.
The \textit{Sharpe ratio} $J_\pi/{\sqrt{\sigma_\pi^2}}$ is an important financial index, which describes how much excess return you receive for the extra volatility you endure for holding a riskier asset. This ratio is used in~\citep{Tamar_variance}, where an actor-only policy gradient is proposed. 
In~\citep{moody2001learning} and~\citep{Morimura2010NonparametricRD} different approaches are used which do not need a Bellman equation for the return variance. In \citep{moody2001learning}, one of the first applications of RL to trading, the authors consider the \emph{differential Sharpe ratio} (a first order approximation of the Sharpe ratio), bypassing the direct variance calculation and proposing a policy gradient algorithm. In \citep{Morimura2010NonparametricRD}, the problem is tackled by approximating the distribution of the possible returns, deriving a Distributional Bellman Equation and minimizing the CVaR. Actor-critic algorithms for maximizing the expected return under CVaR constraints are proposed in~\citep{chow2017risk}. \\
Even if from some of these measures a Bellman Equation can be derived, unfortunately, such recursive relationships are always non-linear. This prevents one to extend the safe guarantees (Theorem 5) and the TRPO algorithm to risk measures other than reward-volatility
The second literature stream is dedicated to the safe update, which, until now, has only been defined for the standard risk-neutral objective function. The seminal paper for this setting is~\citep{kakade2002approximately}, which proposes a conservative policy iteration algorithm with monotonic improvement guarantees for mixtures of greedy policies. This approach is generalized to stationary policies in~\citep{pirotta_spi} and to stochastic policies in~\citep{schulman2015trust}. Building on the former, monotonically improving policy gradient algorithms are devised in~\citep{pirotta2013adaptive,papini2017adaptive} for Gaussian policies. Similar algorithms are developed for Lipschitz policies~\citep{pirotta2015policy} under the assumption of Lipscthiz-continuous reward and transition functions and, more recently, for smoothing policies~\citep{papini2019smoothing} on general MDPs\footnote{A safe version of VOLA-PG based on this line of work is presented in Appendix~\ref{app:safepg} as a more rigorous (albeit less practical) alternative to TRVO.}. On the other hand,~\citep{schulman2015trust} propose TRPO, a general policy gradient algorithm inspired by the monotonically-improving policy iteration strategy. Although the theoretical guarantees are formally lost due a series of approximations, TRPO enjoyed great empirical success in recent years, especially in combination with deep policies. Moreover, the \textit{exact} version of TRPO has been shown to converge to the optimal policy~\citep{neu2017unified}. Proximal Policy Optimization~\citep{schulman2017proximal} is a further approximation of TRPO which has been used to tackle complex, large-scale control problems~\citep{OpenAI_dota,Heess2017Emergence}.

\section{Experiments} \label{sec:experiments}

In this section, we show an empirical analysis of the performance of VOLA-PG (Algorithm~\ref{alg:pg}) and TRVO (Algorithm~\ref{alg:trvo}) applied to a simplified portfolio management task taken from~\citep{Tamar_variance}, and a trading task. We compare these results with two other approaches: a mean-variance policy gradient approach presented in the same article, which we adjusted to take into account discounting, and a risk averse transformation of rewards in the \textit{TRPO} algorithm.
Indeed, a possible way to control the risk of the immediate reward, is to consider its expected utility, converting the the immediate reward $R_t$ into $\widetilde{R}_t$, with a parameter $c$ controlling the sensitivity to the risk: $\widetilde{R}_t \coloneqq \frac{1-e^{-c R_t}}{c}$. This transformation generates a strong parallelism to the literature regarding the variance of the return: as the maximization of the exponential utility function applied to the return is approximately equal to the optimization of the mean-variance objective used in \citep{tamar_variance_2013}, the suggested reward transformation is similarly related to our mean-volatility objective (in a loose
approximation, as depicted in the Appendix C). By applying such a transformation, it could be possible to obtain risk-averse policies similar to ours, but running standard return-optimizing algorithms. 
However, the aforementioned approximation is sound only for small values of the risk aversion coefficient and it suffers the same limitations as each expected utility, as for example shown in \citep{Beyond_EU}. Moreover, the possibility for the agent to get negative rewards might be followed by a strong instability of the learning process. We use it as a baseline in the following experiments, with the name of TRPO-exp.
\subsection{Portfolio Management}
\paragraph{Setting}
The portfolio domain considered is composed of a liquid and a non-liquid asset. The liquid asset has a fixed interest rate $r_l$, while the non-liquid asset has a stochastic interest rate that switches between two values, $r_{nl}^{low}$ and $r^{high}_{nl}$, with probability $p_{switch}$. If the non-liquid asset does not default (a default can happen with probability $p_{risk}$), it is sold when it reaches maturity after a fixed period of $N$ time steps. At $t= 0$, the portfolio is composed only of the liquid asset. At every time step, the investor can choose to buy an amount of non-liquid assets (with a maximum of $M$ assets), each one with a fixed cost $\alpha$. Let us denote the state at time t as $\textbf{x}(t)\in\mathbb{R}^{N+2}$, where $x_1$ is the allocation in the liquid asset, $x_2,\dots,x_{N+1}\in[0,1]$ are the allocations in the non-liquid assets (with time to maturity respectively equal to $1,\dots,N$ time-steps), and $x_{N+2}(t) =r_{nl}(t)-\EV_{t'<t}[r_{nl}(t')]$. When the non-liquid asset is sold, the gain is added to the liquid asset. The reward at time $t$ is computed (unlike \citep{Tamar_variance}) as the liquid P\&L, i.e., the difference between the liquid assets at time $t$ and $t-1$. Further details on this domain are specified in the Appendix D.

\paragraph{Results}
Figure~\ref{fig:pareto_tamar} shows how the optimal solutions of the two algorithms trade off between the maximization of the expected return and the minimization of one of the two forms of variability when changing the risk aversion parameter $\lambda$. The plot on the right shows that the mean-variance frontier obtained by Algorithm \ref{alg:pg} almost coincides with the frontier obtained using the algorithm proposed in \citep{Tamar_variance}. This means that, at least in this domain, the return variance is equally reduced by optimizing either the mean-volatility or the mean-variance objectives.
Instead, from the plot on the left, we can notice that the reward-volatility is better optimized with VOLA-PG. These results are consistent with Lemma~\ref{thm:variance_ineq}.


  \begin{figure}
      \centering
      \includegraphics[width=\columnwidth]{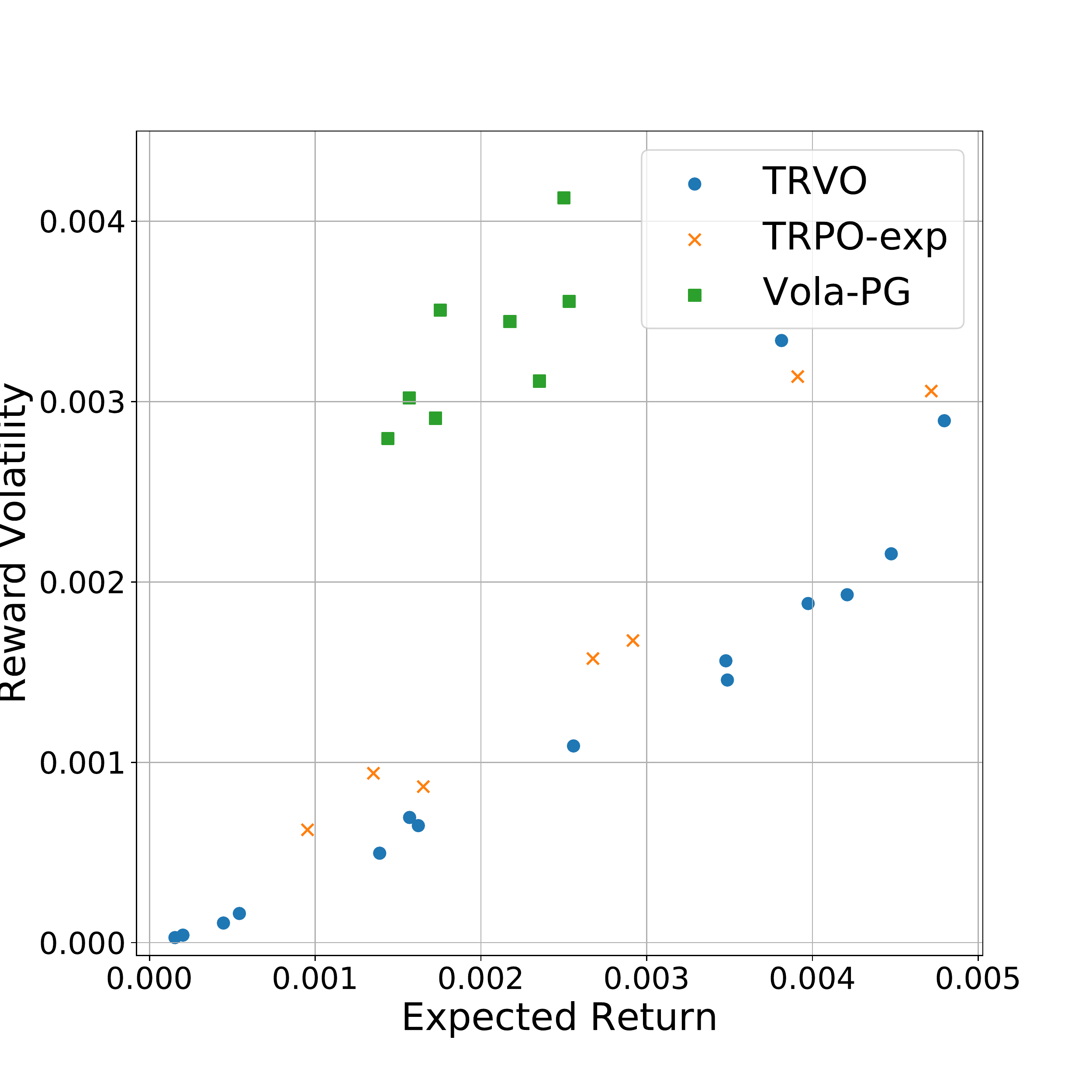}
      \caption{Trading Framework: comparison of the expected return and reward volatility obtained with three different learning algorithms: TRVO, TRPO-exp and Vola-PG.}
      \label{fig:pareto_sp}
  \end{figure}
 

\subsection{Trading Framework}
The second environment is a realistic setting: trading on the S\&P 500 index. The possible actions are buy, sell, stay flat.\footnote{This means we are assuming that short selling is possible: thus selling the stock even if you don't own it, in this case you are betting the stock price is going to decrease} Transaction costs are also considered, i.e. a fee is paid when the position changes.
The data used is the daily price of the S\&P starting from the 1980s, until 2019. The state consists of the last 10 days of percentage price changes, with the addition of the previous portfolio position as well as the fraction of episode to go, with episodes 50 days long.

\paragraph{Results}

 In Figure~\ref{fig:pareto_sp} we can see that the frontier generated by TRVO dominates the naive approach (TRPO-exp). The same Figure has VOLA-PG trained with the same number of iterations as TRVO. The obtained frontier is entirely dominated since the algorithm has not reached convergence. This reflects the faster convergence of TRPO with respect to REINFORCE. TRPO-exp becomes unstable when raising too much the risk aversion parameter $c$, so it is not possible to find the value for which the risk aversion is maximal, which is why there are no points at the bottom left.
\section{Conclusions}

Throughout this paper, we propose a combination of methodologies to control risk in a RL framework. We define a risk measure called reward volatility that captures the variability of the rewards between steps. Optimizing this measure allows to obtain smoother trajectories that avoid shocks, which is a fundamental feature in risk-averse contexts. This measure bounds the variance of the returns and it is possible to derive a linear Bellman equation for it (differently from other risk measures). We propose a policy gradient algorithm that optimizes the mean-volatility objective, and, thanks to the aforementioned linearity, we derive a trust region update that ensures a monotonic improvement of our objective at each gradient update, a fundamental characteristic for online algorithms in a real world context. 
The proposed algorithms are tested on two financial environments and TRVO was shown to outperform the naive approach on the Trading Framework and reach convergence faster than its policy gradient counterpart. Future developements could consists in testing the algorithm in a real online financial setting where new data-points do not belong to the training distribution.
To conclude, the framework is the first to take into account two kinds of safety, as it is capable of keeping risk under control while maintaining the same training and convergence properties as state-of-the-art risk-neutral approaches.


{
\bibliography{bib}
\bibliographystyle{aaai}
}

\onecolumn\clearpage
\newpage
\appendix
\section{Proofs}\label{app:proofs}
First, let us define the state-occupancy measure more formally.
The $t$-step marginal state transition density under policy $\pi$ is defined recursively as follows:
\begin{align*}
	&p_{\pi}(s\xrightarrow{1}s') \coloneqq p_{\pi}(s'\vert s) \coloneqq \EV_{a\sim\pi(\cdot|s)}\left[\mathcal{P}(s'|s,a)\right],\\
	&p_{\pi}(s\xrightarrow{t+1}s') \coloneqq \int_{\Sspace}p_{\pi}(\tilde{s}\vert s)p_{\pi}(\tilde{s}\xrightarrow{t}s')\de \tilde{s}.
\end{align*}
This allows to define the following state-occupancy densities:
\begin{align*}
	d_{\pi}(s'\vert s) &\coloneqq (1-\gamma)\sum_{t=0}^{\infty}\gamma^tp_{\pi}(s \xrightarrow{t} s'); \\
	d_{\Init,\pi}(s) &\coloneqq \int_{S}\mu(s_0)d_{\pi}(s\vert s_0)\de s_0.\label{def:dmupi}
\end{align*}
measuring the discounted probability of visiting a state starting from another state or from the start, respectively.

\subsection{Variance inequality}
\variance*
\begin{proof}
Taking the left hand side (Equation~\ref{eq:path_variance}) and expanding the square we obtain\footnote{To shorten the notation, $R_t$ is used instead of $R(s_t, a_t)$.}:
\begin{equation*}
\begin{aligned}
     \sigma_\pi^2 &  =\EV_{\substack{s_0 \sim \mu \\a_t\sim\pi_{\vtheta}(\cdot|s_t)\\ s_{t+1}\sim \mathcal{P}(\cdot|s_t,a_t)}}\left[\left(\sum_{t=0}^\infty\gamma^t R_t\right)^2\right] + \frac{J_\pi^2}{(1-\gamma)^2} -\frac{2J_\pi}{(1-\gamma)}\EV_{\substack{s_0 \sim \mu \\a_t\sim\pi_{\vtheta}(\cdot|s_t)\\ s_{t+1}\sim \mathcal{P}(\cdot|s_t,a_t)}}\left[\sum_{t=0}^\infty\gamma^t R_t\right] \\&
    =\EV_{\substack{s_0 \sim \mu \\a_t\sim\pi_{\vtheta}(\cdot|s_t)\\ s_{t+1}\sim \mathcal{P}(\cdot|s_t,a_t)}} \left[\left(\sum_{t=0}^\infty\gamma^t R_t\right)^2\right] - \frac{J_\pi^2}{(1-\gamma)^2}.
\end{aligned}
\end{equation*}
Similarly, for the right hand side of the inequality:
\begin{equation*}
\begin{aligned}
 \nu^2_\pi & = \EV_{\substack{s\sim d_{\mu,\pi}\\a\sim\pi_{\vtheta}(\cdot|s)}}\big[\big(R(s,a)-J_\pi\big)^2\big] \\&
= \EV_{\substack{s\sim d_{\mu,\pi}\\a\sim\pi_{\vtheta}(\cdot|s)}}\big[R(s,a)^2\big]+J_\pi^2 - 2J_\pi\EV_{\substack{s\sim d_{\mu,\pi}\\a\sim\pi_{\vtheta}(\cdot|s)}} \big[R(s,a)\big] \\&
= \EV_{\substack{s\sim d_{\mu,\pi}\\a\sim\pi_{\vtheta}(\cdot|s)}}\big[R(s,a)^2\big]-J_\pi^2.
\end{aligned}
\end{equation*}
Thus, the inequality we want to prove reduces to:
\begin{equation*}
(1-\gamma)^2\EV_{\substack{s_0 \sim \mu \\a_t\sim\pi_{\vtheta}(\cdot|s_t)\\ s_{t+1}\sim \mathcal{P}(\cdot|s_t,a_t)}} \left[\left(\sum_{t=0}^\infty\gamma^t R_t\right)^2\right] \leq \EV_{\substack{s\sim d_{\mu,\pi}\\a\sim\pi_{\vtheta}(\cdot|s)}}\big[R(s,a)^2\big].
\end{equation*}
Consider the left hand side. By the Cauchy-Schwarz inequality, it reduces to:
\begin{align*}
    (1-\gamma)^2\EV_{\substack{s_0 \sim \mu \\a_t\sim\pi_{\vtheta}(\cdot|s_t)\\ s_{t+1}\sim \mathcal{P}(\cdot|s_t,a_t)}} \left[\left(\sum_{t=0}^\infty\gamma^t R_t\right)^2\right] \ & \leq
    (1-\gamma)^2\EV_{\substack{s_0 \sim \mu \\a_t\sim\pi_{\vtheta}(\cdot|s_t)\\ s_{t+1}\sim \mathcal{P}(\cdot|s_t,a_t)}} \left[\left(\sum_{t=0}^\infty\gamma^t\right)\left(\sum_{t=0}^\infty\gamma^t R^2_t\right)\right]\\ & 
    = (1-\gamma)\EV_{\substack{s_0 \sim \mu \\a_t\sim\pi_{\vtheta}(\cdot|s_t)\\ s_{t+1}\sim \mathcal{P}(\cdot|s_t,a_t)}} \left[\sum_{t=0}^\infty\gamma^t R^2_t\right] \\&
   = \EV_{\substack{s\sim d_{\mu,\pi}\\a\sim\pi_{\vtheta}(\cdot|s)}}\big[R(s,a)^2\big].
\end{align*}
\end{proof}
\subsection{Risk-averse policy gradient theorem}

\gradient*
\begin{proof}\\
First, we need the following property (see \eg Lemma 1 in~\cite{papini2019smoothing}): 
\begin{restatable}[]{lemma}{recursion_appendix}
    \label{lem:rec}
	Any integrable function $f:\Sspace\to\Reals$ that can be recursively defined as:
	\begin{align*}
	f(s) = g(s) + \gamma\int_{\Sspace}P_{\pi}(s'\vert s)f(s')\de s',
	\end{align*}
	where $g:\Sspace\to\Reals$ is any integrable function, is equal to:
	\begin{align*}
	f(s)  = \frac{1}{1-\gamma}\int_{\Sspace}d_{\pi}(s'\vert s){g(s')}\de s'.
	\end{align*}
\end{restatable}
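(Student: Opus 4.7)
The plan is to prove the lemma by unrolling the recursion and identifying the infinite sum as the state-occupancy density. First, I would substitute the recursion into itself once to get
\begin{equation*}
f(s) = g(s) + \gamma \int_{\Sspace} p_\pi(s'\vert s) g(s')\de s' + \gamma^2 \int_{\Sspace\times\Sspace} p_\pi(s'\vert s)\,p_\pi(s''\vert s')\,f(s'')\de s'\de s''.
\end{equation*}
Using the Chapman--Kolmogorov-type identity built into the recursive definition of $p_\pi(s\xrightarrow{t}s')$ given at the beginning of the appendix, iterating this $n$ times yields
\begin{equation*}
f(s) = \sum_{t=0}^{n-1} \gamma^t \int_{\Sspace} p_\pi(s\xrightarrow{t} s')\,g(s')\de s' + \gamma^n \int_{\Sspace} p_\pi(s\xrightarrow{n} s')\,f(s')\de s',
\end{equation*}
where the $t=0$ term is interpreted as $g(s)$ via the convention that the zero-step kernel is a Dirac at $s$.

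Second, I would take the limit $n\to\infty$. The remainder term vanishes because $p_\pi(s\xrightarrow{n}\cdot)$ is a probability density, so by integrability of $f$ (and the boundedness that follows from the bounded-reward setting used throughout the paper) the integral against $f$ stays finite, while the prefactor $\gamma^n$ decays geometrically. This leaves
\begin{equation*}
f(s) = \sum_{t=0}^{\infty} \gamma^t \int_{\Sspace} p_\pi(s\xrightarrow{t} s')\,g(s')\de s'.
\end{equation*}

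Third, I would swap the sum and the integral; this is justified by Fubini/Tonelli after splitting $g = g^+ - g^-$ into its positive and negative parts, each integrable against the nonnegative kernels $\gamma^t p_\pi(s\xrightarrow{t}\cdot)$ whose sum is $d_\pi(\cdot\vert s)/(1-\gamma)$. Substituting the definition $d_\pi(s'\vert s) \coloneqq (1-\gamma)\sum_{t=0}^\infty \gamma^t p_\pi(s\xrightarrow{t} s')$ gives exactly
\begin{equation*}
f(s) = \frac{1}{1-\gamma}\int_{\Sspace} d_\pi(s'\vert s)\,g(s')\de s',
\end{equation*}
as claimed. The main obstacle is the technical justification of the two limit-exchange steps (vanishing of the tail term and swapping $\sum$ and $\int$); both require the integrability hypothesis on $f$ and $g$, and would be handled by dominated convergence using the geometric bound on $\gamma^n$ together with the fact that the per-step kernels have unit mass.
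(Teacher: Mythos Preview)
Your proof is correct and is the standard unrolling argument for this kind of fixed-point identity. Note, however, that the paper does not actually supply its own proof of this lemma: it simply states the result as a needed property and cites Lemma~1 of \cite{papini2019smoothing}. Your argument is therefore strictly more than what the paper provides, and it is precisely the approach one finds in that reference---iterate the recursion, control the $\gamma^n$-weighted tail, and identify the geometric sum with the discounted occupancy $d_\pi(\cdot\vert s)$.

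One small remark on the technical caveat you already flag: ``integrable'' alone does not guarantee that $\int p_\pi(s\xrightarrow{n}s')f(s')\de s'$ is uniformly bounded in $n$, so the tail does not automatically vanish from the $L^1$ hypothesis. Your parenthetical appeal to boundedness (which holds throughout the paper because rewards are bounded and $\gamma<1$, so all value-type functions are uniformly bounded) is exactly the right patch; in the cited reference the same implicit boundedness is used. With that understood, both limit exchanges go through by dominated convergence as you describe.
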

From Equation~\ref{eq:bellmanV} and the definition of $W_{\pi}$, we have\footnote{To simplify the notation, the dependency on $\vtheta$ is left implicit.}:
\begin{align*}
        &X_\pi(s,a)= \big(R(s,a)-J_\pi\big)^2 + \gamma \EV_{\substack{s'\sim P(\cdot|s,a)\\a'\sim\pi_{\vtheta}(\cdot|s')}}\big[X_\pi(s',a')\big] \\
            &\qquad= \big(R(s,a)-J_\pi\big)^2 + \gamma \EV_{\substack{s'\sim P(\cdot|s,a)\\}}\big[W_\pi(s')\big], \\
        &W_\pi(s) = \EV_{a\sim\pi_{\vtheta}(\cdot|s)}\big[ \big(R(s,a)-J_\pi\big)^2\big] + \gamma \EV_{s'\sim P^\pi(\cdot|s)}\big[W_\pi(s')\big] \\
        &\qquad= \frac{1}{1-\gamma}\EV_{\substack{s'\sim d_{\pi}(\cdot|s)\\a\sim\pi_{\vtheta}(\cdot|s')}}\bigg[\big(R(s',a)-J_\pi\big)^2\bigg], \\
        &\nu^2_\pi = \EV_{\substack{s\sim d_{\mu,\pi}\\a\sim\pi_{\vtheta}(\cdot|s)}}\bigg[\big(R(s,a)-J_\pi\big)^2\bigg] \\
        &\qquad= (1-\gamma) \EV_{s\sim\mu}\bigg[W_\pi(s)\bigg].
\end{align*}
For the second part, we follow a similar argument as in \citep{sutton2000policy}.
We first consider the gradient of $X_\pi(s,a)$ and $W_\pi(s)\ \forall s,a\in\Sspace\times\Aspace$:
\begin{align*}
    &\nabla X_\pi(s,a) = -2\big(R(s,a)-J_\pi\big)\nabla J_\pi + \gamma \EV_{s'\sim P}\big[\nabla W_\pi(s')\big], \nonumber\\
    &\nabla W_\pi(s) = \nabla \int_\Aspace \pi_{\vtheta}(a|s)X_\pi(s,a)\de a \nonumber\\
                &\qquad= \int_\Aspace\big[\nabla\pi_{\vtheta}(a|s)X_\pi(s,a)+\pi_{\vtheta}(a|s)\nabla X_\pi(s,a)\big]\de a \nonumber\\
                &\qquad= \int_\Aspace\big[\nabla\pi_{\vtheta}(a|s)X_\pi(s,a)-2\pi_{\vtheta}(a|s)\big(R(s,a)-J_\pi\big)\nabla J_\pi\big]\de a \nonumber\\&\qquad\qquad+ \gamma \int_\Sspace p^\pi(s'|s)\nabla W_\pi(s')\de s'\nonumber \\
                &\qquad= \frac{1}{1-\gamma}\int_\Sspace 
                d_\pi(s'|s)\int_\Aspace\big[\nabla\pi_{\vtheta}(a|s')X_\pi(s,a)-2\pi_{\vtheta}(a|s)\big(R(s,a)-J_\pi\big)\nabla J_\pi \big] \de a \de s',\\
\end{align*}
where the last equality is from Lemma~\ref{lem:rec}. Finally:
\begin{align*}
    &\nabla \nu^2_\pi = (1-\gamma)\int_\Sspace \mu(s)\nabla W_\pi(s)\de s \nonumber\\
                    &\qquad= \int_\Sspace 
                d_{\mu,\pi}(s)\int_\Aspace\big[\nabla\pi_{\vtheta}(a|s)X_\pi(s,a)-2\pi_{\vtheta}(a|s)\big(R(s,a)-J_\pi\big)\nabla J_\pi \big] \de a \de s \nonumber\\
                &\qquad= \EV_{\substack{s'\sim d_{\mu,\pi}\\a\sim\pi_{\vtheta}(\cdot|s)}}\big[\nabla\log\pi_{\vtheta}(a|s)X_\pi(s,a)\big]- 2\nabla J_\pi\EV_{\substack{s'\sim d_{\mu,\pi}\\a\sim\pi_{\vtheta}(\cdot|s)}}\big[R(s,a)-J_\pi\big] \nonumber \\
                &\qquad=\EV_{\substack{s'\sim d_{\mu,\pi}\\a\sim\pi_{\vtheta}(\cdot|s)}}\big[\nabla\log\pi_{\vtheta}(a|s)X_\pi(s,a)\big].\nonumber
\end{align*}
\end{proof}

\subsection{Trust Region Volatility Optimization}
\ADVlambda*
\begin{proof}\footnote{For the sake of clarity, we use the notation $\tau|\pi$ to denote the expectation over trajectories: $s_0\sim\mu, a_t\sim\pi(\cdot|s_t), s_{t+1}\sim \mathcal{P}(\cdot|s_t,a_t)$.}
\begin{align*}
\eta_{\widetilde{\pi}} &= (1-\gamma)\EV_{\tau|\widetilde{\pi}}\big[\sum_t\gamma^t(R(s_t,a_t)-\lambda(R(s_t,a_t)-J_{\widetilde{\pi}})^2)\big]\\
&= (1-\gamma)\EV_{\tau|\widetilde{\pi}}\big[V^\lambda_\pi(s_0)-V^\lambda_\pi(s_0)+\sum_t\gamma^t(R(s_t,a_t)-\lambda(R(s_t,a_t)-J_{\widetilde{\pi}})^2)\big]\\
&=\eta_\pi + (1-\gamma)\EV_{\tau|\widetilde{\pi}}\big[\sum_t\gamma^t(R(s_t,a_t)-\lambda(R(s_t,a_t)-J_{\widetilde{\pi}})^2+\gamma V_\pi(s_{t+1})-V_\pi(s_t))\big]
\end{align*}
Now, the goal is to obtain the discounted sum of the mean-volatility advantages defined in Eqaution~\eqref{eq:B-dvantage}; however, it must be evaluated using policy $\pi$ instead of $J_{\widetilde{\pi}}$. Hence, we recall the result in \citep{kakade2002approximately}\footnote{the difference is only in the normalization terms, which are used accordingly to the definitions above.}:
\begin{equation*}
    J_{\widetilde{\pi}} = J_\pi + (1-\gamma)\EV_{\tau|\widetilde{\pi}}\big[\sum_t \gamma^t A_\pi(s_t, a_t)\big]
\end{equation*}
Hence, by using $R_t$ to denote $R(s_t,a_t)$:
\begin{align*}
(R_t-J_{\widetilde{\pi}})^2 &= (R_t-J_{\pi}-(1-\gamma)\EV_{\tau|\widetilde{\pi}}\big[\sum_t \gamma^t A_\pi(s_t, a_t)\big])^2\\
&= (R_t-J_\pi)^2 + (1-\gamma)^2\EV_{\tau|\widetilde{\pi}}\big[\sum_t \gamma^t A_\pi(s_t, a_t)\big]^2-2(1-\gamma)(R_t-J_\pi)\EV_{\tau|\widetilde{\pi}}\big[\sum_t \gamma^t A_\pi(s_t, a_t)\big]
\end{align*}
In this way, it is possible to separate the mean-volatility advantage function from the standard advantage function, since the performance difference becomes:
\begin{align*}
    \eta_{\widetilde{\pi}}-\eta_\pi =& (1-\gamma)\EV_{\tau|\widetilde{\pi}}\big[\sum_t\gamma^t(R_t-\lambda(R_t-J_\pi)^2+\gamma V^\lambda_\pi(s_{t+1})-V^\lambda_\pi(s_t))\big] \\
   & -\lambda (1-\gamma)^3\EV_{\tau|\widetilde{\pi}}\big[\sum_t\gamma^t \EV_{\tau|\widetilde{\pi}}[\sum_t \gamma^t A_\pi(s_t, a_t)]^2\big] \\
    &+ 2\lambda (1-\gamma)^2\EV_{\tau|\widetilde{\pi}}\big[\sum_t\gamma^t(R_t-J_\pi) \EV_{\tau|\widetilde{\pi}}[\sum_t \gamma^t A_\pi(s_t, a_t)]\big]\\
    =& (1-\gamma)\EV_{\tau|\widetilde{\pi}}\big[\sum_t\gamma^tA_\pi^\lambda(s_t,a_t)]-\lambda (1-\gamma)^2\EV_{\tau|\widetilde{\pi}}[\sum_t \gamma^t A_\pi(s_t, a_t)]^2\\
    &+ 2\lambda(1-\gamma)^2\EV_{\tau|\widetilde{\pi}}[\sum_t \gamma^t A_\pi(s_t, a_t)]\EV_{\tau|\widetilde{\pi}}[\sum_t \gamma^t(R_t- J_\pi)]
\end{align*}
But, if we consider that
\begin{align*}
    \EV_{\tau|\widetilde{\pi}}[\sum_t \gamma^t(R_t-J_\pi)] &= \EV_{\tau|\widetilde{\pi}}[\sum_t\gamma^tR_t]-\frac{J_\pi}{1-\gamma} \\
    &= \frac{J_{\widetilde{\pi}}-J_\pi}{1-\gamma} = \EV_{\tau|\widetilde{\pi}}[\sum_t\gamma^tA_\pi(s_t, a_t)]
\end{align*}
Then, collecting everything together, we obtain
\begin{align*}
    \eta_{\widetilde{\pi}}-\eta_\pi = (1-\gamma)\EV_{\tau|\widetilde{\pi}}\big[\sum_t\gamma^tA_\pi^\lambda(s_t,a_t)]+\lambda (1-\gamma)^2\EV_{\tau|\widetilde{\pi}}[\sum_t \gamma^t A_\pi(s_t, a_t)]^2
\end{align*}
\end{proof}

\subsection{Gradient Estimator}\label{ssec:gradest}
Here we are in a finite setting, let's define a sampled trajectory as $\tau^i_j = s_0^i, a_0^i, ... , s^i_{T-1}, a^i_{T-1}$ with $i = 0,\dots,N-1$, and $\tau_j^i \in \mathcal{D}_j$.

\xestimator*
\begin{proof}
First of all, we recall that 
\begin{equation*}
     \EV_\tau[\hat{J}] = J_\pi.
\end{equation*}
Thus:
\begin{align*}
    \EV_{\tau_1}\EV_{\tau_2}\EV_{\substack{s'\sim d_\pi\\a'\sim\pi_{\vtheta}(\cdot|s')}}[\widehat{X}]  &= \  \frac{1-\gamma}{1-\gamma^{T}}\frac{1}{N} \sum_{i=0}^{N-1} \EV_{\tau_1}\EV_{\tau_2}\EV_{\substack{s'\sim d_\pi\\a'\sim\pi_{\vtheta}(\cdot|s')}} \left[ \sum_{t=0}^{T-1} \gamma^t (\mathcal{R}(s_t^i,a_t^i)-\hat{J}_1)(\mathcal{R}(s_t^i,a_t^i)-\hat{J}_2)\right]\\ &=\frac{1-\gamma}{1-\gamma^{T}}\frac{1}{N} \sum_{i=0}^{N-1}\EV_{\substack{s'\sim d_\pi\\a'\sim\pi_{\vtheta}(\cdot|s')}} \sum_{t=0}^{T-1} \gamma^t \left[\EV_{\tau_1}(\mathcal{R}(s_t^i,a_t^i)-\hat{J}_1)\EV_{\tau_2}(\mathcal{R}(s_t^i,a_t^i)-\hat{J}_2)\right]
    \\  &= \frac{1-\gamma}{1-\gamma^{T}}\frac{1}{N}\sum_{i=0}^{N-1} \sum_{t=0}^{T-1} \gamma^t \EV_{\substack{s'\sim d_\pi\\a'\sim\pi_{\vtheta}(\cdot|s')}}\left[(\mathcal{R}(s_t^i,a_t^i)-J_\pi)(\mathcal{R}(s_t^i,a_t^i)-J_\pi)\right]\\ &= 
    \frac{1-\gamma}{1-\gamma^{T}}\frac{1}{N} \sum_{i=0}^{N-1} \sum_{t=0}^{T-1} \gamma^t \EV_{\substack{s'\sim d_\pi\\a'\sim\pi_{\vtheta}(\cdot|s')}}\left[(\mathcal{R}(s'_t,a'_t)-J_\pi)^2\right]\\&
    = X_\pi.
\end{align*}
\end{proof}

\section{Safe Volatility Optimization}\label{app:safepg}
In this section, we provide a more rigorous alternative to TRVO. To do so, we simply adapt the Safe Policy Gradient approach from~\cite{papini2019smoothing} to our mean-volatility objective and find safe, adaptive values for the step size $\alpha$ and the batch size $N$ in Algorithm~\ref{alg:pg}.
We restrict our analysis to \textit{smoothing} policies:
\begin{restatable}[Smoothing policies]{definition}{fosmooth}\label{def:smooth}
	Let $\Pi_{\Theta}=\{\pi_{\vtheta}\mid \vtheta\in\Theta\subseteq\Reals^m\}$ be a class of twice-differentiable parametric policies. We call it \textbf{smoothing} if the parameter space $\Theta$ is convex and there exists a set of non-negative constants  $(\psi;\kappa;\xi)$ such that, for each state and in expectation over actions, the Euclidean norm of the score function, its square Euclidean norm and the spectral norm of the observed information are upper-bounded, i.e., $\forall s \in \Sspace$:
	\begin{align*}
	\EV_a\big[\|\nabla\log\pi_{\vtheta}(a|s)\|\big]\leq\sm; \qquad
    \EV_a\big[\|\nabla\log\pi_{\vtheta}(a|s)\|^2\big]\leq\kappa;  
    \qquad
    \EV_a\big[\|\nabla\nabla^\top\log\pi_{\vtheta}(a|s)\|\big]\leq\xi. 
	\end{align*}
\end{restatable}
Smoothing policies include Gaussians with fixed variance and Softmax policies~\citep{papini2019smoothing}. For smoothing policies, the performance improvement yielded by a generic parameter update is lower bounded by:

\begin{restatable}[]{theorem}{safeimprovement}\label{thm:safe_improvement}
Let $\Pi_{\Theta}$ be a smoothing policy class, $\vtheta\in\Theta$ and $\vtheta' = \vtheta+\Delta\vtheta$. For any $\Delta\vtheta\in\Reals^m$:
\begin{equation}\label{eq:update}
    \eta_{\vtheta'}-\eta_{\vtheta} \geq \langle \Delta\vtheta, \nabla\eta_{\vtheta}\rangle - \frac{L}{2}\|\Delta\vtheta\|^2,
\end{equation}
where:
\begin{equation*}
        L= \frac{c}{(1-\gamma)^2}\left(\frac{2\gamma\psi^2}{1-\gamma}+\kappa+\xi\right)+\frac{2R^2_{max}\psi^2}{(1-\gamma)^3}.
\end{equation*}
\end{restatable}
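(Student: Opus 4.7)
The plan is to show that $\eta_{\vtheta}$ is $L$-smooth (\ie has an $L$-Lipschitz gradient, equivalently $\|\mathcal{H}\eta_{\vtheta}\| \leq L$ uniformly on $\Theta$), and then invoke the descent lemma. The latter follows from Taylor's theorem with integral remainder: for $\vtheta' = \vtheta + \Delta\vtheta$,
\[
\eta_{\vtheta'} - \eta_{\vtheta} = \langle \Delta\vtheta, \nabla\eta_{\vtheta}\rangle + \int_0^1 (1-t)\,\Delta\vtheta^\top \mathcal{H}\eta_{\vtheta + t\Delta\vtheta}\,\Delta\vtheta \, \mathrm{d}t,
\]
and combining $\|\mathcal{H}\eta\| \leq L$ with Cauchy--Schwarz gives $\eta_{\vtheta'} - \eta_{\vtheta} \geq \langle \Delta\vtheta, \nabla\eta_{\vtheta}\rangle - \tfrac{L}{2}\|\Delta\vtheta\|^2$. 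Convexity of $\Theta$ (built into the smoothing policy definition) ensures the whole segment lies in the domain.

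The substantive work is bounding $\|\mathcal{H}\eta_{\vtheta}\|$. Writing $\eta_\pi = J_\pi - \lambda\nu^2_\pi$, I would reuse the risk-neutral expression for $\mathcal{H}J_\pi$ from \citep{papini2019smoothing}, and obtain $\mathcal{H}\nu^2_\pi$ by differentiating the risk-averse PGT of Theorem~\ref{thm:variance_grad} a second time. The subtle point is that $X_\pi$ depends on $\vtheta$ not only through transitions and actions but also through $J_\pi$ appearing inside $(R - J_\pi)^2$, so a careful product/chain rule application on the Bellman recursion for $X_\pi$, combined with Lemma~\ref{lem:rec}, is needed. The extra derivatives contribute $\mathcal{H}\bigl((R-J_\pi)^2\bigr) = -2\mathcal{H}J_\pi\,(R - J_\pi) + 2\,\nabla J_\pi\,\nabla^\top J_\pi$; the linear-in-$(R - J_\pi)$ term vanishes in expectation against $d_{\mu,\pi}$ and $\pi$ because $\EV_{s \sim d_{\mu,\pi},\,a \sim \pi}[R(s,a)] = J_\pi$ by the alternative definition of $J_\pi$, leaving only a clean outer-product $\tfrac{2}{1-\gamma}\,\nabla J_\pi\,\nabla^\top J_\pi$ inside $\mathcal{H}\nu^2_\pi$.

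To convert the Hessian expression into a norm bound I would apply the smoothing assumption to control the score-function expectations by $\psi$, $\kappa$, $\xi$, and then bound the value and gradient of the \emph{effective} action-value $Q_\pi - \lambda X_\pi$ by observing that it equals a $\gamma$-discounted sum of the policy-dependent reward $R(s,a) - \lambda(R(s,a) - J_\pi)^2$, whose magnitude is at most $c$ by the definition in the statement. This yields $|Q_\pi - \lambda X_\pi| \leq c/(1-\gamma)$ and, via the standard score-function telescoping argument of \citep{papini2019smoothing}, $\|\nabla(Q_\pi - \lambda X_\pi)\| \leq c\gamma\psi/(1-\gamma)^2$. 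Combining these with $\|\nabla J_\pi\| \leq \Rmax\psi/(1-\gamma)$ and applying triangle and Jensen inequalities summand by summand reproduces exactly the stated $L$.

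The main obstacle I expect is the bookkeeping around the implicit $J_\pi$-dependence of $X_\pi$: verifying that the cross term $\mathcal{H}J_\pi\,(R-J_\pi)$ cancels in expectation, then carrying the surviving outer-product through the Bellman recursion with the correct $1/(1-\gamma)$ normalization arising from the switch from $d_\pi(\cdot\vert s)$ to $d_{\mu,\pi}$, and ensuring that all discount factors line up. The extra additive summand $2\Rmax^2\psi^2/(1-\gamma)^3$ in $L$, absent in the purely risk-neutral analysis, is precisely what this outer-product term contributes, and pinning down its coefficient is the delicate bit.
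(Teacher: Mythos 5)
Your proposal is correct and follows essentially the same route as the paper: the paper also reduces the theorem to a uniform bound on $\|\mathcal{H}\eta_\pi\|$ (its Lemma~\ref{lem:hba}) combined with the smoothness-based improvement bound of Theorem 9 in \cite{papini2019smoothing}, and its Hessian computation proceeds exactly as you describe, with $\mathcal{H}\bigl((R-J_\pi)^2\bigr)=-2\mathcal{H}J_\pi(R-J_\pi)+2\nabla J_\pi\nabla^\top J_\pi$, the cancellation of the cross term via $\EV_{s\sim d_{\mu,\pi},a\sim\pi}[R-J_\pi]=0$, and the bounds $|Q_\pi-\lambda X_\pi|\le c/(1-\gamma)$, $\|\nabla(Q_\pi-\lambda X_\pi)\|\le c\gamma\psi/(1-\gamma)^2$, $\|\nabla J_\pi\|\le \Rmax\psi/(1-\gamma)$. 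You also correctly attribute the extra $2\Rmax^2\psi^2/(1-\gamma)^3$ summand to the surviving outer-product term.
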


To prove this result, we need some additional Lemmas. The challenging part is bounding the spectral norm of the Hessian Matrix of $\eta$. First, we derive a compact expression for the Hessian of $\nu$:
\begin{restatable}[]{lemma}{hessian_appendix}
Given a twice-differentiable parametric policy $\pi_{\vtheta}$, the policy Hessian is:
\begin{equation*}
\begin{aligned}
    \mathcal{H}\nu^2_\pi = \frac{1}{1-\gamma}\EV_{\substack{s\sim d_{\mu,\pi}\\ a\sim\pi_{\vtheta}(\cdot|s')}}\bigg[&\big(\nabla\log\pi_{\vtheta}(a|s)\nabla^\top\log\pi_{\vtheta}(a|s)+\mathcal{H}\log\pi_{\vtheta}(a|s)\big)X_\pi(s,a)\\
    &+\nabla\log\pi_{\vtheta}(a|s)\nabla^\top X_\pi(s,a) + \nabla X_\pi(s,a)\nabla^\top\log\pi_{\vtheta}(a|s)\bigg]\\ &+\frac{2}{1-\gamma}\nabla J \nabla^\top J.
    \end{aligned}
\end{equation*}
\end{restatable}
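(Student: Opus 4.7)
The plan is to differentiate twice the Bellman-type recursion for the state-volatility function $W_\pi$ and then collapse the resulting recursion using Lemma~\ref{lem:rec}. Concretely, I start from
\begin{equation*}
W_\pi(s) = \int_\Aspace \pi_{\vtheta}(a|s)\big(R(s,a)-J_\pi\big)^2\de a + \gamma \int_\Sspace P_\pi(s'|s) W_\pi(s')\de s',
\end{equation*}
and apply the Hessian operator term-by-term. For the first integrand I use the Hessian product rule
\begin{equation*}
\mathcal{H}\big(\pi_{\vtheta} X_\pi\big) = X_\pi \mathcal{H}\pi_{\vtheta} + \nabla\pi_{\vtheta}\nabla^\top X_\pi + \nabla X_\pi \nabla^\top \pi_{\vtheta} + \pi_{\vtheta}\mathcal{H}X_\pi,
\end{equation*}
where $X_\pi(s,a)$ is used as the natural container of the $(R-J_\pi)^2$ term after unfolding one step of the Bellman equation \eqref{eq:bellmanV}.

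Next I would observe that since $R(s,a)$ is not a function of the parameters, the only $\vtheta$-dependence inside $(R(s,a)-J_\pi)^2$ is through $J_\pi$, and hence
\begin{equation*}
\mathcal{H}\big[(R(s,a)-J_\pi)^2\big] = -2\big(R(s,a)-J_\pi\big)\mathcal{H}J_\pi + 2\,\nabla J_\pi \nabla^\top J_\pi.
\end{equation*}
Substituting this into the Hessian of the Bellman equation for $W_\pi$ yields a recursion of the form $\mathcal{H}W_\pi(s) = g_\pi(s) + \gamma\int P_\pi(s'|s)\mathcal{H}W_\pi(s')\de s'$ with $g_\pi$ collecting all the terms above that do not involve $\mathcal{H}W_\pi$. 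Applying Lemma~\ref{lem:rec} then turns this into a single integral against $d_\pi(\cdot|s)$, contributing the $1/(1-\gamma)$ prefactor that appears in the statement.

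From there I would use the log-trick $\nabla \pi_{\vtheta} = \pi_{\vtheta}\nabla\log\pi_{\vtheta}$ and $\mathcal{H}\pi_{\vtheta} = \pi_{\vtheta}\big(\nabla\log\pi_{\vtheta}\nabla^\top\log\pi_{\vtheta} + \mathcal{H}\log\pi_{\vtheta}\big)$ to rewrite the terms of $g_\pi$ as expectations under $\pi_{\vtheta}(\cdot|s')$, and finally integrate against the initial-state distribution $\mu$, using $(1-\gamma)\int\mu(s)d_\pi(s'|s)\de s = d_{\mu,\pi}(s')$ together with the $(1-\gamma)$ factor from $\nu_\pi^2 = (1-\gamma)\EV_{s\sim\mu}[W_\pi(s)]$ (Equation~\eqref{eq:sigmatilde}). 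The additive rank-one piece $\tfrac{2}{1-\gamma}\nabla J\nabla^\top J$ will come out directly from the $2\nabla J_\pi \nabla^\top J_\pi$ contribution, which survives the $d_{\mu,\pi}$-integration unchanged (since it is constant in $(s,a)$).

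The one step that requires care, and the main ``obstacle,'' is the term $-2(R-J_\pi)\mathcal{H}J_\pi$. It is not negligible at the pointwise level, but I expect it to vanish after integration because the very definition of $J_\pi$ gives $\EV_{s\sim d_{\mu,\pi},\,a\sim\pi_{\vtheta}}[R(s,a)-J_\pi] = 0$; hence this term drops out once the outer expectation against $d_{\mu,\pi}$ is taken. Verifying that this cancellation indeed happens at the correct stage of the derivation — that is, after applying Lemma~\ref{lem:rec} and before re-labelling the integration variable — is the delicate bookkeeping step, and is what makes the final expression clean, with no residual $\mathcal{H}J_\pi$ terms.
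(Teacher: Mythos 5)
Your proposal is correct and follows essentially the same route as the paper's own proof: apply the Hessian product rule to $\int_\Aspace \pi_{\vtheta}(a|s)X_\pi(s,a)\de a$ after unfolding one step of the Bellman equation, compute $\mathcal{H}\big[(R-J_\pi)^2\big]=-2(R-J_\pi)\mathcal{H}J_\pi+2\nabla J_\pi\nabla^\top J_\pi$, collapse the recursion with Lemma~\ref{lem:rec} and the log-trick, and kill the residual $\mathcal{H}J_\pi$ term via $\EV_{s\sim d_{\mu,\pi},\,a\sim\pi_{\vtheta}}[R(s,a)-J_\pi]=0$ after the outer integration against $\mu$. You also correctly pinpoint that this cancellation only occurs at the $d_{\mu,\pi}$ level (not pointwise in $s$), which is exactly the step the paper relies on.
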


\begin{proof}
First note that:
\begin{equation*}
    \mathcal{H}(\pi_{\vtheta}  X_\pi ) =  X_\pi \mathcal{H}\pi_{\vtheta}  + \nabla\pi_{\vtheta} \nabla^\top X_\pi  + \nabla X_\pi  \nabla^\top\pi_{\vtheta}  + \pi_{\vtheta} \mathcal{H}X_\pi.
\end{equation*}
Then\footnote{For the sake of brevity, the dependence on $s,a$ is often omitted.}:
\begin{align*}
    \mathcal{H}W_\pi(s) = \mathcal{H}\int_{\Aspace}&\pi_{\vtheta}(a|s)X_\pi(s,a)da \\ =\int_\Aspace&\bigg[X_\pi(s,a) \mathcal{H}\pi_{\vtheta}(a|s)  + \nabla\pi_{\vtheta}(a|s) \nabla^\top X_\pi(s,a)  \\&+ \nabla X_\pi(s,a)  \nabla^\top\pi_{\vtheta}(a|s)  + \pi(a|s) \mathcal{H}X_\pi(s,a)\bigg]da \\
    = \int_\Aspace&\bigg[X\mathcal{H}\pi_{\vtheta}  + \nabla\pi_{\vtheta} \nabla^\top X  + \nabla X \nabla^\top\pi_{\vtheta}  + \pi_{\vtheta}\mathcal{H}\big((R-J_\pi)^2\big)\bigg]da \\&+ \gamma\int_\Sspace P(s'|s)W_\pi(s')ds'.
\end{align*}
Since:
\begin{align*}
    \mathcal{H}\bigg((R(s,a)-J_\pi)^2\bigg) &= 2\big(R(s,a)-J_\pi\big)\mathcal{H}[R(s,a)-J_\pi]+2\nabla J_\pi\nabla^\top J_\pi \\
    &= -2\mathcal{H} J_\pi \big(R(s,a)-J_\pi\big) +2\nabla J_\pi\nabla^\top J_\pi,
\end{align*}
then, using Lemma \ref{lem:rec} and the log-trick:
\begin{align*}
    &\mathcal{H}W_\pi(s)=  \frac{1}{1-\gamma}\EV_{s'\sim d_{\pi}(\cdot|s)}\big[\int_\Aspace[X\mathcal{H}\pi_{\vtheta}  + \nabla\pi_{\vtheta} \nabla^\top X  + \nabla X \nabla^\top\pi_{\vtheta}   -2\pi_{\vtheta}\mathcal{H} J_\pi\big(R-J_\pi\big)]da\big]\\
    &\qquad\qquad\qquad+ 2\nabla J_\pi\nabla^\top J_\pi \int_\Sspace d(s'|s)\int_\Aspace\pi_{\vtheta}(a|s')\de a\de s' \\
    &\qquad= \frac{1}{1-\gamma}\EV_{\substack{s'\sim d_{\pi}(\cdot|s)\\a\sim\pi_{\vtheta}(\cdot|s')}}\hspace{-4pt}\big[(\nabla\log\pi_{\vtheta}\nabla^\top\log\pi_{\vtheta}\hspace{-1pt}+\hspace{-1pt}\mathcal{H}\log\pi_{\vtheta})X+\hspace{-2pt}\nabla\log\pi_{\vtheta}\nabla^\top X +\hspace{-2pt}\nabla X \nabla^\top\log\pi_{\vtheta}\big]\\
    &\qquad\qquad-\frac{2}{1-\gamma}\mathcal{H} J_\pi \EV_{\substack{s\sim d_{\pi}(\cdot|s)\\a\sim\pi_{\vtheta}(\cdot|s')}}\big[R-J_\pi\big]+\frac{2}{1-\gamma}\nabla J_\pi\nabla^\top J_\pi,\\
&\mathcal{H}\nu_{\pi}^2 = \EV_{s\sim\mu}\big[\mathcal{H}W_\pi(s)\big] \\
&\qquad=  \frac{1}{1-\gamma}\EV_{\substack{s\sim d_{\mu,\pi}\\a\sim\pi_{\vtheta}}}\hspace{-9pt}\big[(\nabla\log\pi_{\vtheta}\nabla^\top\log\pi_{\vtheta}+\mathcal{H}\log\pi_{\vtheta})X+\nabla\log\pi_{\vtheta}\nabla^\top X +\nabla X\nabla^\top\log\pi_{\vtheta}\big] \\
&\qquad\qquad-\frac{2}{1-\gamma}\mathcal{H} J_\pi \EV_{\substack{s\sim d_{\mu,\pi}(\cdot|s)\\a\sim\pi_{\vtheta}(\cdot|s')}}\big[R-J_\pi\big]+\frac{2}{1-\gamma}\nabla J_\pi\nabla^\top J_\pi.
\end{align*}
The second term is null, from the definition \eqref{eq:J_hat} of $J_\pi$:
\begin{equation*}
    \EV_{\substack{s\sim d_{\mu,\pi}\\a\sim\pi_{\vtheta}(\cdot|s')}}\big[R(s,a)-J_\pi\big] = 0.
\end{equation*}
\end{proof}
It is now possible to use the results proven in \citep{papini2019smoothing} for the Hessian of $J_\pi$:
\begin{equation*}
\begin{aligned}
    \mathcal{H}J_\pi = \frac{1}{1-\gamma}\EV_{\substack{s\sim d_{\mu,\pi}\\a\sim\pi_{\vtheta}(\cdot|s')}}\bigg[&\big(\nabla\log\pi_{\vtheta}(a|s)\nabla^\top\log\pi_{\vtheta}(a|s)+\mathcal{H}\log\pi_{\vtheta}(a|s)\big)Q_\pi(s,a) \\&+\nabla\log\pi_{\vtheta}(a|s)\nabla^\top Q_\pi(s,a) +\nabla Q_\pi(s,a)\nabla^\top\log\pi_{\vtheta}(a|s) \bigg].
\end{aligned}
\end{equation*}
Putting everything together, the following holds:
\begin{equation}\label{eq:Heta}
\begin{aligned}
    \mathcal{H}\eta_\pi = &\frac{1}{1-\gamma}\EV_{\substack{s\sim d_{\mu,\pi}\\a\sim\pi_{\vtheta}(\cdot|s')}}\bigg[\big(\nabla\log\pi_{\vtheta}(a|s)\nabla^\top\log\pi_{\vtheta}(a|s)+\mathcal{H}\log\pi_{\vtheta}(a|s)\big)\big[Q_\pi(s,a)-\lambda X_\pi(s,a)\big] \\&+\nabla\log\pi_{\vtheta}(a|s)\nabla^\top\big[Q_\pi(s,a)-\lambda X_\pi(s,a)\big] +\nabla \big[Q_\pi(s,a)-\lambda X_\pi(s,a)\big]\nabla^\top\log\pi_{\vtheta}(a|s) \bigg]\\
    &+\frac{2}{1-\gamma}\nabla J_\pi\nabla^\top J_\pi.
\end{aligned}
\end{equation}

This expression allows to upper bound the spectral norm of the Hessian: 

\begin{restatable}[]{lemma}{hessian_bound_appendix}\label{lem:hba}
Given a $(\sm;\kappa;\xi)$-smoothing policy $\pi_{\vtheta}$, the spectral norm of the policy Hessian can be upper-bounded as follows:
\begin{equation*}
    \|\mathcal{H}\eta_\pi\|\leq \frac{c}{(1-\gamma)^2}\left(\frac{2\gamma\psi^2}{1-\gamma}+\kappa+\xi\right)+\frac{2R^2_{max}\psi^2}{(1-\gamma)^3},
\end{equation*}
where
\begin{alignat*}{2}
    c & \coloneqq \sup_{s\in\Sspace,a\in\Sspace}|\mathcal{R}(s,a)-\lambda(\mathcal{R}(s,a)-J_\pi)^2| \\
      &= \max\left\{\min\left\{\frac{1}{4\lambda}+J_\pi; R_{max}+4\lambda R^2_{max}\right\}; \pm\big[R_{max}-\lambda(R_{max}-J_\pi)^2\big] \right\}.
\end{alignat*}
\end{restatable}

\begin{proof}
First we note that:
\begin{equation*}
    \begin{aligned}
        |Q_\pi(s,a)-\lambda X_\pi(s,a)| &= \frac{1}{1-\gamma}\EV_{\substack{s'\sim d_{\pi}(\cdot|s)\\a'\sim\pi_{\vtheta}(\cdot|s')}}\big[ |R(s,a)-\lambda (R(s,a)-J_\pi)^2|\big] &\\
        &\leq \frac{c}{1-\gamma} &\forall s\in\Sspace,a\in\Aspace.
\end{aligned}
\end{equation*}
Then, using the same argument as in Lemma 6 from~\citep{papini2019smoothing}, the following upper bounds hold:
\begin{align*}
    &\| \nabla J_\pi \| \leq \frac{R_{max}\psi}{1-\gamma}, \\
    &\|\nabla (Q_\pi-\lambda X_\pi)\| \leq \frac{\gamma}{(1-\gamma)^2}c\psi.
\end{align*}
Finally, applying triangle and Jensen inequalities on Equation \ref{eq:Heta}:
\begin{align*}
    \|\mathcal{H}\eta_\pi\|\leq &\EV_{\substack{s\sim d_{\mu,\pi}\\a\sim\pi_{\vtheta}(\cdot|s')}}\big[\|\nabla\log\pi_{\vtheta}\nabla^\top(Q_\pi-\lambda X_\pi)\|\big]+ \EV_{\substack{s\sim d_{\mu,\pi}\\a\sim\pi_{\vtheta}(\cdot|s')}}\big[\|\nabla(Q_\pi-\lambda X_\pi)\nabla^\top\log\pi_{\vtheta}\|\big]\\
    &+\EV_{\substack{s\sim d_{\mu,\pi}\\a\sim\pi_{\vtheta}(\cdot|s')}}\big[\|\nabla\log\pi_{\vtheta}\nabla^\top\log\pi_{\vtheta}(Q_\pi-\lambda X_\pi)\|\big]\\
    &+\EV_{\substack{s\sim d_{\mu,\pi}\\a\sim\pi_{\vtheta}(\cdot|s')}}\big[\|\nabla\nabla^\top\log\pi_{\vtheta}(Q_\pi-\lambda X_\pi)\|\big]\\
    &+\frac{2}{1-\gamma}\|\nabla J_\pi\|^2.
\end{align*}
The application of the previous bounds and the smoothing assumption give the thesis.
\end{proof}

We can now see that Theorem~\ref{thm:safe_improvement} is just an adaptation of Theorem 9 from~\cite{papini2019smoothing} to the mean-volatility objective $\eta$, using the Hessian-norm bound from Lemma~\ref{lem:hba}.

In the case of stochastic gradient-ascent updates, as the ones employed in Algorithm~\ref{alg:pg}, this result can be directly used to derive optimal, safe meta-parameters for Algorithm~\ref{alg:pg}: 

\begin{restatable}{corollary}{safeparams}\label{cor:safeparams}
Let $\Pi_{\Theta}$ be a smoothing policy class, $\vtheta\in\Theta$ and $\delta \in (0,1)$. Given a $\delta$-confidence bound on the gradient estimation error, \ie an $\epsilon_\delta>0$ such that:
\begin{equation}\label{eq:grad_est_error}
\mathbb{P}\bigg(\|\widehat\nabla_N\eta_{\vtheta}-\nabla\eta_{\vtheta}\|\leq \frac{\epsilon_\delta}{\sqrt{N}}\bigg)\geq 1-\delta\qquad \forall \vtheta\in\Theta\, N\geq 1,
\end{equation}
the guaranteed performance improvement of the stochastic gradient-ascent update $\vtheta_{k+1} = \vtheta_k+\alpha\widehat{\nabla}_N \eta_{\vtheta_k}$ is maximized by step size $\alpha^* = \frac{1}{2L}$ and batch size $N^*= \bigg\lceil{\frac{4\epsilon_\delta^2}{\|\widehat\nabla_N\eta_{\vtheta_k}\|^2}}\bigg\rceil$. Moreover, with probability at least $1-\delta$, the following non-negative performance improvement is guaranteed:
\begin{equation*}
    \eta_{\vtheta_{k+1}} - \eta_{\vtheta_k} \geq \frac{\|\widehat{\nabla}_N \eta_{\vtheta_k}\|^2}{8L}.
\end{equation*}
\end{restatable}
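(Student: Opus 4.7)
The plan is to apply the smoothness-based performance improvement guarantee from Theorem \ref{thm:safe_improvement} to the specific stochastic update $\vtheta_{k+1}=\vtheta_k+\alpha\widehat{\nabla}_N\eta_{\vtheta_k}$ and then jointly optimize the resulting high-probability lower bound over the meta-parameters $\alpha$ and $N$, mirroring the derivation of Corollary 14 in \citep{papini2019smoothing}. Since Lemma \ref{lem:hba} already provides the Hessian-norm constant $L$ that appears in Theorem \ref{thm:safe_improvement}, the argument reduces to a quadratic optimization and an application of the confidence bound \eqref{eq:grad_est_error}.

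First I would set $\Delta\vtheta=\alpha\widehat{\nabla}_N\eta_{\vtheta_k}$ inside the inequality of Theorem \ref{thm:safe_improvement} to obtain $\eta_{\vtheta_{k+1}}-\eta_{\vtheta_k}\geq\alpha\langle\widehat{\nabla}_N\eta_{\vtheta_k},\nabla\eta_{\vtheta_k}\rangle-\tfrac{L\alpha^2}{2}\|\widehat{\nabla}_N\eta_{\vtheta_k}\|^2$. To handle the inner product, I would rewrite it as $\|\widehat{\nabla}_N\eta_{\vtheta_k}\|^2-\langle\widehat{\nabla}_N\eta_{\vtheta_k},\widehat{\nabla}_N\eta_{\vtheta_k}-\nabla\eta_{\vtheta_k}\rangle$, apply Cauchy--Schwarz, and invoke \eqref{eq:grad_est_error} to deduce that, with probability at least $1-\delta$, $\langle\widehat{\nabla}_N\eta_{\vtheta_k},\nabla\eta_{\vtheta_k}\rangle\geq\|\widehat{\nabla}_N\eta_{\vtheta_k}\|^2-\|\widehat{\nabla}_N\eta_{\vtheta_k}\|\,\epsilon_\delta/\sqrt{N}$. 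Writing $g\coloneqq\|\widehat{\nabla}_N\eta_{\vtheta_k}\|$, this yields the bound $B(\alpha,N)\coloneqq\alpha g(g-\epsilon_\delta/\sqrt{N})-\tfrac{L\alpha^2}{2}g^2$, which is a concave quadratic in $\alpha$.

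The remaining step is the joint maximization. Optimizing $B$ first in $\alpha$ gives the maximizer $\alpha(N)=\tfrac{1}{L}\bigl(1-\tfrac{\epsilon_\delta}{g\sqrt{N}}\bigr)$ with maximum value $B^\star(N)=\tfrac{1}{2L}\bigl(g-\epsilon_\delta/\sqrt{N}\bigr)^2$. Because each iteration costs $N$ trajectories, the natural second objective is the per-sample guarantee $B^\star(N)/N$; substituting $u=\sqrt{N}$ reduces this to maximizing $(gu-\epsilon_\delta)^2/u^4$. A one-line first-order analysis gives $u^\star=2\epsilon_\delta/g$ and hence $N^\star=\bigl\lceil 4\epsilon_\delta^2/g^2\bigr\rceil$. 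Plugging $N^\star$ back into $\alpha(N)$ yields $\alpha^\star=1/(2L)$, and inserting both into $B^\star(N^\star)$ simplifies to $g^2/(8L)$, which is exactly the claimed non-negative improvement.

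The main subtlety, rather than any heavy calculation, is the choice of the second-stage objective: maximizing raw improvement would blow up $N$ and push $\alpha$ toward $1/L$, whereas maximizing per-sample improvement produces the factor $1/2$ in both the step size and the implicit safety margin of the batch size, which is essential for the clean closed-form $g^2/(8L)$. Everything else—the Cauchy--Schwarz step, the quadratic optimization, and the ceiling due to integrality of $N$—is mechanical, so the proof is essentially a transcription of Corollary 14 of \citep{papini2019smoothing} with $L$ replaced by the mean-volatility smoothness constant from Lemma \ref{lem:hba}.
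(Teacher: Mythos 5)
Your proposal is correct and follows essentially the same route as the paper, which proves this corollary simply by citing it as an adaptation of Corollary 14 of Papini et al.\ (2019) with the mean-volatility smoothness constant $L$ from Lemma~\ref{lem:hba}; your write-up just makes explicit the Cauchy--Schwarz step, the quadratic optimization in $\alpha$, and the per-trajectory maximization in $N$ that the paper leaves implicit. The computations check out: $\sqrt{N^*}=2\epsilon_\delta/\|\widehat{\nabla}_N\eta_{\vtheta_k}\|$ gives $\alpha^*=1/(2L)$ and the guaranteed improvement $\|\widehat{\nabla}_N\eta_{\vtheta_k}\|^2/(8L)$.
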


Again, this is just an adaptation of Corollary 14 from~\cite{papini2019smoothing} to the mean-volatility objective $\eta$, using the Hessian-norm bound from Lemma~\ref{lem:hba}.

Under a Gaussianity assumption on $\nabla \eta_{\vtheta}$, which is reasonable for a sufficiently large batch size $N$, the error bound $\epsilon_{\delta}$ can be derived from an F-distribution ellipsoidal confidence region:

\begin{theorem}
Let $\widehat\nabla_N\eta_{\vtheta}$ the mean of $N$ independent samples drawn from $\nabla\eta_{\vtheta}\sim\mathcal{N}_m(\vmu,\Sigma)$. Then:
\begin{equation*}
    \mathbb{P}(\|\widehat{\nabla}_N\eta_{\vtheta}-\nabla\eta_{\vtheta}\|\leq\epsilon_\delta)\geq 1-\delta, 
\end{equation*}
where
\begin{equation*}
    \epsilon_\delta \leq \sqrt{\frac{Nm}{N-m}\|S\|F_{1-\delta}(m; n-m)},
\end{equation*}
 $N$ is the batch size, $m$ is the dimension of the parameters space $\Theta$, $F_{1-\delta}(a;b)$ is the $\delta$ quantile of a F-distribution with $a$ and $b$ degrees of freedom, and $\|S\|$ is spectral norm of the sample variance matrix S generated by the gradient samples.
\end{theorem}
\begin{proof}
We recall Corollary 5.3 and Theorem 5.9 in ~\citep{hardle2007applied}, adopting the same notation\footnote{This means that we will use $x$ to refer to $\nabla\eta_{\vtheta}$ and $x_i$ for its $i$-th estimation.}.
Let $\vx_1,\dots,\vx_N\sim \mathcal{N}_m(\vmu,\Sigma)$.
The sample mean and sample variance are defined as: 
\begin{equation*}
    \begin{aligned}
        \overline{\vx} &= \frac{1}{N}\sum_{i=1}^N\vx_i, \\
        S &= \frac{1}{N}\sum_{i=1}^N(\vx_i - \overline{\vx})(\vx_i - \overline{\vx})^T.
    \end{aligned}
\end{equation*}
Then:
\begin{equation*}
    (\overline{\vx}-\vmu)^\top S^{-1}(\overline{\vx}-\vmu)\sim\frac{1}{N-1} T^2(m; N-1) = \frac{m}{N-m}F(m;N-m),
\end{equation*}
where $T^2(a;b)$ and $F(a;b)$ are respectively the Hotelling's $T^2$ and $F$ distribution with $a$ and $b$ degrees of freedom.\\ 
Consequently, the following standard result provides a confidence region for $\vmu$:
\begin{proposition}
For all $\delta\in(0,1)$, $\mathbb{P}(\vmu\in E) \geq 1-\delta$, where $E$ is the following set:
\begin{equation*}
    E = \left\{\vx\in\mathbb{R}^m : (\overline{\vx}-\vx)^\top S^{-1}(\overline{\vx}-\vx) < \frac{m}{N-m}F_{1-\delta}(m;N-m)\right\},
\end{equation*}
and $F_{1-\delta}(a;b)$ is the $(1-\delta)$-quantile of the F-distribution with $a$ and $b$ degrees of freedom.
\end{proposition}
Hence, the estimation error $\vmu-\overline{\vx}$ is contained, with probability at least $1-\delta$, in the following set:
\begin{equation*}
 E_0 = \left\{\vx\in\mathbb{R}^m : \vx^\top S^{-1}\vx < \frac{m}{N-m}F_{1-\delta}(m; N-m)\right\},
\end{equation*}
which is bounded by the following ellipsoid:
\begin{equation*}
    \mathcal{E} = \left\{\vx\in\mathbb{R}^m : \vx^\top A\vx = 1\right\},
\end{equation*}
where $A=\left(\frac{mF_{1-\delta}(m;N-m)}{N-m}S\right)^{-1}$.
As a consequence, the euclidean norm of the estimation error is upper bounded by the largest semiaxis of the ellipsoid:
\begin{equation*}
    \zeta_\delta \coloneqq \norm{\vmu-\overline{\vx}} \leq \max_{i\in\left\{1,\dots,m\right\}} \{c_i\},
\end{equation*}
where $c_1,\dots,c_m$ are the semiaxes of $\mathcal{E}$. The semiaxes can be derived from the matrix $A$ using the following equalities:
\begin{align*}
    &\eig_i(A) = \frac{1}{c_i^2} &\text{for $i=1,\dots,m$},
\end{align*}
where $\eig_i(A)$ denotes the $i$-th eigenvalue of $A$ (the order does not matter). Thus, we can bound the estimation error norm as:
\begin{align*}
    \epsilon \leq \frac{1}{\sqrt{\min_{i\in\left\{1,\dots,m\right\}}\{\eig_i(A)\}}}.
\end{align*}
Finally, we can just compute the largest eigenvalue of $S$:
\begin{align*}
    \min_{i\in\left\{1,\dots,m\right\}}\{\eig_i(A)\} &= \min_{i\in\left\{1,\dots,m\right\}}\left\{\eig_i\left(\left(\frac{mF_{1-\delta}(m;N-m)}{N-m}S\right)^{-1}\right)\right\} \\
    &= \frac{N-m}{mF_{1-\delta}(m;N-m)}\max_{i\in\left\{1,\dots,m\right\}}\left\{\eig_i\left(S\right)\right\},
\end{align*}
Leading to:
\begin{align*}
    \zeta_\delta \leq \sqrt{\frac{m} {N-m}F_{1-\delta}(m;N-m)\norm{S}},
\end{align*}
with probability at least $1-\delta$, where $\norm{S}$ denotes the spectral norm of the sample variance matrix $S$ (equal to the largest eigenvalue since $S$ is positive semi-definite).\\
Equation~\ref{eq:grad_est_error} is verified by defining
\begin{equation*}
    \epsilon_\delta \coloneqq \sqrt{N}\zeta_\delta.
\end{equation*}
\end{proof}

\section{Exponential Utility applied on the reward}\label{app:exp_reward}
In this section we show that the exponential utility applied on the reward approximates the mean-volatility objective. \\
We consider the following measures:
\begin{align*}
    J_\pi &= (1-\gamma) \EV_{\tau|\pi}\big[\sum_t \gamma^t R_t\big]\\
    M_\pi &= (1-\gamma) \EV_{\tau|\pi}\big[\sum_t \gamma^t R^2_t\big]\\
    \nu_\pi^2 &= (1-\gamma)\EV_{\tau|\pi}\big[\sum_t (R_t -J_\pi)^2\big],
\end{align*}
where $R_t = R(S_t, A_t)$ is the reward obtained at time t.\\
Let's take into account now the exponential utility applied on the reward, and its second-order Taylor expansion:
\begin{equation*}
    U(R) = e^{-c R} = 1 -cR+ \frac{c^2}{2}R^2+ o(c^3)
\end{equation*}
Hence, if we sum all the discounted utilities of the rewards and take the expected value, we obtain:
\begin{align*}
    \EV_{\tau|\pi}\big[\sum_t \gamma^t e^{-c R_t}\big] &= \EV_{\tau|\pi}\big[\sum_t \gamma^t (1- cR_t +\frac{c^2}{2} R^2_t) + o(c^3)\big] \\
    &= \frac{1}{1-\gamma} - c \EV_{\tau|\pi}\big[\sum_t \gamma^t R_t] + \frac{c^2}{2} \EV_{\tau|\pi}\big[\sum_t \gamma^t R^2_t\big]+ o(c^3)   \\
&= \frac{1}{1-\gamma} -  \frac{c}{1-\gamma}J_\pi + \frac{c^2}{2(1-\gamma)}M_\pi + o(c^3)    
\end{align*}
Again, consider the Taylor expansion applied to the logarithm:
$$\log(\alpha + x) \approx \log(\alpha) + \frac{x}{\alpha}- \frac{x^2}{2\alpha^2}$$
As a consequence the following loose approximation holds:
\begin{align*}
    \log(\EV_{\tau|\pi}\big[\sum_t \gamma^t e^{-cR_t}\big]) &= - \log{(1-\gamma)} - cJ_\pi + \frac{c^2}{2}M_\pi - \frac{(1-\gamma)^2}{2}[(-\frac{cJ_\pi}{1-\gamma}+ \frac{c^2}{2(1-\gamma)}M_\pi)^2] + o(c^3) \\
    &\approx -\log(1-\gamma) - cJ_\pi + \frac{c^2}{2}(M_\pi-J_\pi^2)
\end{align*}
Consequently:
$$ \max_{\pi} -\frac{1}{c}\log(\EV_{\tau|\pi}[\sum_t \gamma^t e^{cR_t}])\approx \max_\pi J_\pi - \frac{c}{2}[M_\pi-J_\pi^2]$$ 

Finally, following the definition of $\nu_\pi^2$, we can obtain the first-order approximation.
\begin{align*}
    \nu_\pi^2 &= (1-\gamma) \EV_{\tau|\pi}[\sum_t \gamma^t (R_t - J_\pi)^2] \\
    &= (1-\gamma)  \EV_{\tau|\pi} [\sum_t \gamma^t (R^2_t + J_\pi^2 - 2 R_t J_\pi)]  = M_\pi- J_\pi^2
\end{align*}

\section{Experiments}\label{app:exp}
\subsection{Portfolio Optimization}

The task parameters we used are specified below:
{\begin{center}\begin{tabular}{ccc}
$T = 50$; & $r_l = 1.001$; & $N = 4$; \\
$r_{nl}^{high}=2$; & $r_{nl}^{low}=1.1$;&$M=10$; \\
$p_{risk} = 0.05$; & $p_{switch} = 0.1$; &$\alpha = \frac{0.2}{M}$. \\
\end{tabular}\end{center}}
There are $M+1$ possible actions, and the policy we used is a neural network with two hidden layers and 10 neuron per hidden layer.

\subsection{S\&P Trading Framework}
In the trading environment, actions $a_t \in \{-1,0,1\}$, where $-1,1$ indicate short and long positions, and $0$ stands for flat position.
The value of the asset at time $t$ is $p_t$, and the reward is equal to $$R_t = a_t(p_t-p_{t-1})-f|a_t-a_{t-1}|,$$
where $f$ indicates the fees, set to $7/100000$.\\
the policy we used is a neural network with two hidden layers and 64 neurons per hidden layer. 

\end{document}